\definecolor{codegreen}{rgb}{0,0.6,0}
\definecolor{codegray}{rgb}{0.5,0.5,0.5}
\definecolor{codepurple}{rgb}{0.58,0,0.82}
\definecolor{backcolour}{rgb}{0.95,0.95,0.92}
\lstdefinestyle{overleafstyle}{
    backgroundcolor=\color{backcolour},   
    commentstyle=\color{codegreen},
    keywordstyle=\color{magenta},
    numberstyle=\tiny\color{codegray},
    stringstyle=\color{codepurple},
    basicstyle=\ttfamily\footnotesize,
    breakatwhitespace=false,         
    breaklines=true,                 
    captionpos=b,                    
    keepspaces=true,                 
    numbers=left,                    
    numbersep=5pt,                  
    showspaces=false,                
    showstringspaces=false,
    showtabs=false,                  
    tabsize=2
}
\newtheorem{theorem}{Theorem}[section]
\newtheorem{proposition}{Proposition}
\newcommand{\cf}{x^\text{cf}}
\newcommand{\codeurl}{\url{https://github.com/BirkhoffG/CFMark}}
\newcommand*\circled[1]{\textcircled{\raisebox{-0.9pt}{#1}}}
\newcommand\norm[1]{\left\lVert#1\right\rVert}
\newcommand{\TITLE}{Watermarking Counterfactual Explanations}
\title{\TITLE}
\author{%
Hangzhi Guo,~Firdaus Ahmed Choudhury,~Tinghua Chen,~Amulya Yadav \\
  Penn State University\\
  \texttt{\{hangz,fac5186,tuc579,amulya\}@psu.edu} \\
}
\begin{document}

\maketitle

\begin{abstract}
Counterfactual (CF) explanations for ML model predictions provide actionable recourse recommendations to individuals adversely impacted by predicted outcomes.
However, despite being preferred by end-users, CF explanations have been shown to pose significant security risks in real-world applications;
in particular, malicious adversaries can exploit CF explanations to perform query-efficient model extraction attacks on the underlying proprietary ML model.
To address this security challenge, we propose {\emph{CFMark}}, a novel model-agnostic watermarking framework for detecting unauthorized model extraction attacks relying on CF explanations.
{\emph{CFMark}} involves a novel bi-level optimization problem to embed an indistinguishable watermark into the generated CF explanation such that any future model extraction attacks using these watermarked CF explanations can be detected using a null hypothesis significance testing (NHST) scheme. At the same time, the embedded watermark does not compromise the quality of the CF explanations. 
We evaluate {\emph{CFMark}} across diverse real-world datasets, CF explanation methods, and model extraction techniques. 
Our empirical results demonstrate  {\emph{CFMark}}'s effectiveness, achieving an F-1 score of $\sim$0.89 in identifying unauthorized model extraction attacks using watermarked CF explanations. Importantly, this watermarking incurs only a negligible degradation in the quality of generated CF explanations  (i.e., $\sim$1.3\% degradation in validity and 1.6\% in proximity).
Our work establishes a critical foundation for the secure deployment of CF explanations in real-world applications.

\end{abstract}

\section{Introduction}
Within the field of Explainable AI techniques, counterfactual (CF) explanations\footnote{
Counterfactual explanation \citep{wachter2017counterfactual} and algorithmic recourse \citep{ustun2019actionable} are closely connected \citep{verma2020counterfactual, stepin2021survey}. Hence, we use both terms interchangeably throughout this paper.} 
\citep{wachter2017counterfactual,mothilal2020explaining,karimi2021algorithmic,guo2023counternet} become a popular technique for explaining the predictions generated by machine learning (ML) models \citep{bhatt2020explainable,shang2022not}. Given an input instance $x$, CF explanation methods identify a similar counterfactual instance $\cf$ that would yield a different, often more desirable, prediction from the ML model. CF explanations are useful for offering recourse to vulnerable groups. For example, when an ML model spots a student as vulnerable to dropping out of school, CF explanation techniques can suggest corrective measures to teachers, who can intervene accordingly. 

\begin{figure*}[ht!]
     \centering
     \begin{subfigure}[h]{0.83\textwidth}
         \centering
         \includegraphics[width=0.95\textwidth]{./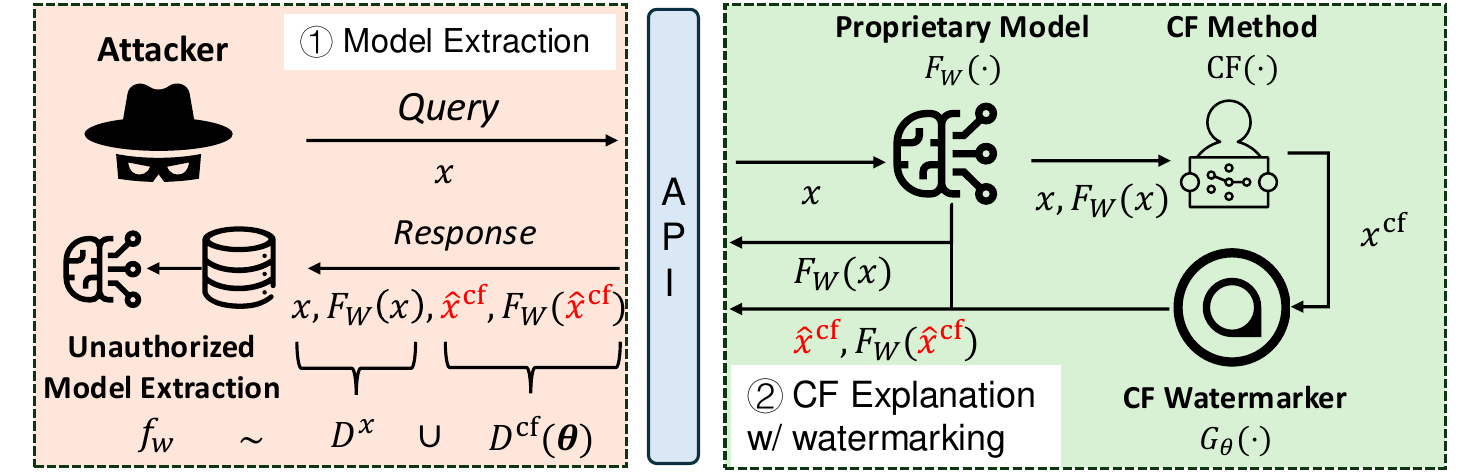}
         \caption{Illustration of \circled{1} model extraction attack using counterfactual explanations, and \circled{2} the procedure of generating watermarks of counterfactual explanations.}
         \label{fig:pipeline}
     \end{subfigure}
     \hfill
     \begin{subfigure}[h]{0.16\textwidth}
         \centering
         \includegraphics[width=0.95\textwidth]{./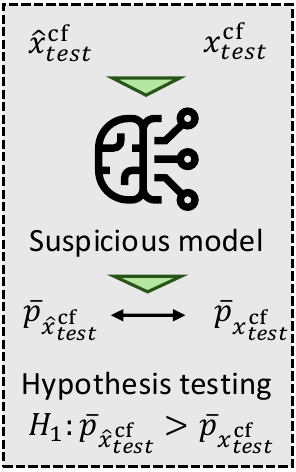}
         \caption{Model ownership verification.}
         \label{fig:ownership}
     \end{subfigure}
     \hfill
     \caption{\textbf{Illustration of CFMark.}
     (a) \circled{1} Model extraction Attack. The adversaries use querying data $D^x$ and their CF explanations $D^\text{cf}(\boldsymbol{\theta})$ to train a private model $f_w$ that reproduces the predictive behavior of the proprietary model $F_W$. 
     \circled{2} Watermarking. Given CF explanations $\cf$ and the proprietary model $F_W$, CFMark embeds a watermark into CF explanations $\hat{x}^\text{cf} = G_\theta(\cf)$.
     (b) Model ownership verification. If the adversaries use $D^\text{cf}(\boldsymbol{\theta})$ to train an extracted model $f_w$, our framework can \emph{identify} this unauthorized usage via hypothesis testing.}
    \label{fig:framework}
\end{figure*}


Despite these usability benefits, the widespread real-world adoption of CF explanation techniques in practical ML systems remains limited.
One key obstacle hindering this wider adoption is the risk of model extraction attacks through CF explanations \citep{sokol2019counterfactual}.
There is an inherent tension between explainability and security:
the transparency offered by CF explanations, while benefiting legitimate users, can be exploited by adversaries to extract the underlying proprietary model.
As shown in Figure~\ref{fig:pipeline}, attackers can exploit CF explanations to execute model extraction attacks by using both the input instances and the corresponding CF explanations to train a surrogate model $f_w$ which reproduces the predictive behaviors of the proprietary model $F_W$.
This approach is more query-efficient than traditional model extraction attacks relying only on input-output pairs \citep{aivodji2020model,wang2022dualcf}, thus posing a serious model security challenge.


Limited research has been conducted on countermeasures against model extraction attacks that exploit CF explanations. While \citet{yang2022differentially} proposed using differential privacy in CF explanations to mitigate such attacks, their method relies on drastic perturbations to CF explanations to impede attackers, consequently reducing the explanations' utility.
Alternatively, digital watermarking presents a strong defense mechanism against model extraction attacks \citep{jia2021entangled,szyller2021dawn,kirchenbauer2023watermark} by embedding identifiable markers within the model or its training data, enabling subsequent model ownership verification.
However, existing digital watermarking methods do not explicitly address the vulnerabilities exposed through CF explanations, which makes existing defense mechanisms inadequate for mitigating the threat of model extractions using CF explanations.




\textbf{Contributions.}
In this paper, we propose CFMark, a novel model-agnostic watermarking framework to safeguard against unauthorized model extraction attacks using CF explanations. 
CFMark involves two stages (illustrated in Figure~\ref{fig:framework}):
(i) \emph{watermark embedding}: the watermark function $G_\theta$ embeds a watermark into CF explanations $\hat{x}^\text{cf} = G_\theta(\cf)$ (see Figure~\ref{fig:pipeline} \circled{2}).
(ii) \emph{watermark detection}: we employ a pairwise t-test to identify suspected models trained on the watermarked CF explanations (see Figure~\ref{fig:ownership}).
Importantly, CFMark is model-agnostic and can be used with any CF method without compromising the utility of CF explanations.
Our primary contributions are in three folds: 
\begin{itemize}[leftmargin=*]
    \item (Problem-Wise) We propose a novel approach to combat the model extraction attacks using CF explanations. 
    Instead of sacrificing the quality of CF explanations for enhanced security,
    we propose to add watermarks to CF explanations that can provide easy identifiability of any ML model that is trained (by an adversary without authorization) using watermarked CF explanations as training data. To our knowledge, we are the first to consider watermarking CF explanations to prevent model extraction attacks.
    \item (Methodology-Wise) 
    We propose a model-agnostic framework for embedding watermarks in CF explanations for model ownership verification. This framework involves a bi-level optimization to generate watermarks that can be later identified via a pairwise t-test. We further theoretically analyze the effectiveness of this verification procedure.    
    \item (Experiment-Wise) We conduct an extensive evaluation of CFMark across various real-world datasets, CF explanation methods, and model extraction techniques. Our results show that our watermarking techniques can achieve reliable identifiability (0.89 F1-score), without trading off the utility of counterfactual explanations (only $\sim$1.3\% reduction in validity and $\sim$1.6\% in proximity).
\end{itemize}

\section{Related Work}
\label{sec:related}
\textbf{Counterfactual Explanation Techniques.}
Prior work on CF explanation techniques can be organized into two categories: (i) \emph{non-parametric methods} \citep{wachter2017counterfactual, ustun2019actionable, mothilal2020explaining, van2019interpretable, karimi2021algorithmic, upadhyay2021towards,verma2020counterfactual,karimi2020survey}, which typically find optimal CF explanations by solving an individual optimization or searching problem, and (ii) \emph{parametric methods} \citep{pawelczyk2020learning, yang2021model, mahajan2019preserving, guo2023counternet,guo2023rocoursenet,vo2023feature}, which adopt parametric models (e.g., a neural network model) to generate recourses. 
However, existing techniques fail to consider the security risks associated with providing CF explanations to end-users, which leaves the generated CF explanations vulnerable to adversaries to extract proprietary ML models.

\textbf{Security and Privacy Risks in CF Explanations.}
Recent research has highlighted the privacy and security risks associated with 
model explanations \citep{sokol2019counterfactual, shokri2021privacy}. In particular, CF explanations can be used to carry out model extraction attacks \citep{aivodji2020model, wang2022dualcf}, linkage attacks \citep{goethals2023privacy}, and membership inference attacks \citep{pawelczyk2023privacy}. To mitigate these risks, \citet{vo2023feature} use feature discretization to defend against linkage attacks, but this approach lacks generalizability to defend against other attacks, such as model extraction attacks. Alternatively, differentially private CF explanations exhibit resistance to model extraction and membership inference attacks \citep{yang2022differentially}. 
However, this approach directly deteriorates the quality of CF explanations for improved security, which limits its practicality in real-world applications.

\textbf{Model Extraction Attacks and Watermarking.}
Our work is closely related to prior literature on model extraction (ME) attacks, which focuses on constructing private models that behave similarly to the proprietary victim model.  \citet{tramer2016stealing} first conceptualized this attack, and later work improves the efficacy of ME attacks via active learning \citep{chandrasekaran2020exploring, pal2020activethief}, semi-supervised learning \citep{jagielski2020high}, adversarial examples \citep{juuti2019prada,yu2020cloudleak}, and CF explanations \citep{aivodji2020model, wang2022dualcf}.
A common approach for protecting against model extraction attacks focuses on reducing the quality of ML models, such that it becomes unattractive for an adversary to conduct a model extraction attack \citep{tang2024modelguard}. Alternatively, digital watermarking techniques embed unique signals into either the training data \citep{li2022untargeted}, model parameters \citep{song2017machine, jia2021entangled}, and/or model outputs \citep{szyller2021dawn,kirchenbauer2023watermark}, which enables the defender (i.e., enterprise) to verify suspicious models constructed via a model extraction attack.
Unfortunately, existing watermarking techniques have not explicitly considered the use of CF explanations in model extraction attacks. 

\section{Preliminaries}
\label{sec:problem}

We focus on binary classification as it represents the most common setting in CF explanation research \cite{verma2020counterfactual, guo2023counternet}.
Let $D_t = \{(x_i,y_i)\}^{N}_{i=1}$ denote a training dataset containing $N$ data points, where $x_i \in \mathbb{R}^d$ represents the $i$-th input data point, and $y_i \in \{0, 1\}$ denotes its label. The enterprise service provider (i.e., defender) uses this training dataset $D_t$ to train their proprietary predictive model $F_W: \mathcal{X} \to [0,1]$, which outputs a probabilistic score $F_W(x)$ for a given input $x$.

\textbf{Counterfactual Explanations.}
In addition to producing predictions $F_W(x)$ of input $x$ using their predictive model, the service provider (i.e., defender) uses a CF explanation method (see Figure~\ref{fig:framework}) to generate counterfactuals $\cf$ that explain the predictions of $F_W$ on input $x$. 
Given an input $x$ and the proprietary model $F_W$, a CF explanation method $\verb|CF|(x;F_W)$ generates CF explanations $\cf$ which satisfies two criteria: 
(i) they need to be \emph{valid} \citep{wachter2017counterfactual,guo2023counternet}, i.e., the CF explanations get opposite predictions from the original input $F_W(\cf)=1-F_W(x)$, and (ii) exhibit \emph{low proximity} \citep{ustun2019actionable,guo2023counternet}, i.e., the change from $x$ to $\cf$ is small.

\textbf{Model Extraction Attack via CF Explanations.}
We consider the scenario where a malicious adversary  (i.e., attacker) aims to perform a model extraction attack on a proprietary machine learning model $F_W$. The attacker operates under the following assumptions:
(i) \emph{Black-box Access}: The attacker has black-box access to the proprietary model $F_W$ and the CF explanation method $\verb|CF|(\cdot;F_W)$, i.e., they can query the model with input $x$, and obtain the corresponding output probability $F_W(x)$, the corresponding CF explanations' input-output pairs $(\cf, F_W(\cf))$, where $\cf = \verb|CF|(x; F_W)$.
(ii) \emph{Attack Dataset Generation}: The attacker queries $F_W$ with $M$ distinct attack points, denoted as $\{(x_i)\}_{i=1}^M$, to construct a supervised attack dataset $D^x = {(x_i,F_W(x_i))}_{i=1}^M$.
(iii) \emph{Model Extraction}: The attacker leverages the attack dataset $D^x$ to train an extracted ML model, $f_{w}:\mathcal{X} \to [0,1]$, that closely approximates the behavior of the proprietary model, i.e., $\{f_{w}(x_i) \approx F_W(x_i) | \ \forall x_i \in D^x\}$.

Prior work has demonstrated that adversaries can exploit CF explanations to improve the query efficiency of model extraction attacks \cite{aivodji2020model, wang2022dualcf}. For example, \emph{MRCE} \citep{aivodji2020model} incorporates a set of corresponding CF explanations $D^\text{cf}=\{(\cf_i, 1 - F_W(x_i))\}^{M}$ into the attack dataset, and use both $D^x$ and $D^\text{cf}$ for training their extracted model. \emph{DualCF} \citep{wang2022dualcf} further improves the query efficiency of \emph{MRCE} (see Appendix~\ref{appendix:attacks} for details).
In this paper, we address this vulnerability by introducing a watermarking approach to protect against such attacks.

\section{CFMark: A Watermarking Framework for CF Explanations}
\label{sec:watermark}

We propose CFMark, a novel watermarking framework for CF explanations to protect against model extraction attacks that use CF explanations. At a high level, CFMark consists of two stages: (i) \emph{watermark embedding}: we design a watermarking function $G_\theta$ that can be used to embed watermarks into the CF explanations that are generated by the CF explanation method used by the defender. Specifically, $G_\theta$ inputs a CF explanation $\cf$ and outputs a watermarked CF explanation $G_\theta(\cf)$. These watermarked CF explanations can later be detected if malicious users use these watermarked CFs as training data for unauthorized model extraction attacks. Importantly, unlike prior methods that aggressively perturb the CF explanations, CFMark's perturbations have little impact on the utility of CF explanations.
(ii) \emph{watermark detection}: we employ a pairwise t-test to identify any third-party black-box ML model trained on our watermarked CF explanations.

\subsection{Stage 1: Watermark Embedding}
In this paper, we define a $\theta$-perturbation to the input CF explanation as our watermarking function $G_\theta(\cf) = \cf + \theta$, where the perturbation $\theta$ possesses the same dimension as $\cf$ (i.e., $\cf_i \in  \mathbb{R}^d$, and $\theta_i \in \mathbb{R}^d$). 
Crucially, the selection of $\theta$ involves optimizing two key objectives:
(i) \emph{detectability:} the perturbation $\theta$ should maximize the likelihood of successfully detecting an extracted ML model trained on watermarked CF explanations;
(ii) \emph{usability:} the perturbation $\theta$ should minimize the degradation of watermarked CF explanations' quality. This degradation is quantified by the change in validity induced by the $\theta$-perturbation watermark.
Note that proximity is implicitly preserved as the perturbation $\theta$ is constrained within an $l_p$-norm ball (as discussed later), which inherently limits the impact on the proximity of watermarked CF explanations.




\textbf{Bi-Level Optimization for Watermarking.} 
Let $D^x = \{(x_i, F_W(x_i)) \}^M_i$ denote an initial attack set, queried by adversaries from the proprietary ML model $F_W$ using $M$ attack points.
In addition, the adversaries have access to the corresponding \emph{watermarked} CF explanations and their predictions $D^\text{cf}(\boldsymbol{\theta}) = \{(\cf_i + \theta_i, F_W(\cf_i + \theta_i)| \mbox{ }\forall\mbox{ } (x_i,y_i) \in D^x\}$.
We denote the \emph{unwatermarked} CF explanations and their predictions as $D^\text{cf} = \{(\cf_i, F_W(\cf_i)| \mbox{ }\forall\mbox{ } (x_i,y_i) \in D^x\}$.
As per our black-box access assumption, the adversary cannot access unwatermarked CF explanations.
Given $D^x$ and $D^\text{cf}(\boldsymbol{\theta})$, the adversaries can train an extracted model:
\begin{equation}
    w^* = \arg\min_{w} \frac{1}{N} \sum_{(x_i, y_i) \in D^x \cup D^\text{cf}(\boldsymbol{\theta})} \mathcal{L}(f_w(x_i), y_i) 
\end{equation}
Crucially, this extracted ML model $f_{w^*}$ is trained on the watermarked CF explanations $D^\text{cf}(\boldsymbol{\theta})$, which makes it possible to detect watermarks at a later stage. Intuitively, if the extracted model is trained on the watermarked explanations, this model should exhibit greater confidence in classifying watermarked explanations than unwatermarked explanations. 
Therefore, we formalize the \emph{detectability} objective as the \emph{maximization of the logarithmic difference} between model outputs from watermarked and unwatermarked CF explanations from the extracted ML models $f_{w^*}$ (i.e., $\log(f_{w^*}(\cf+\theta) - \log(f_{w^*}(\cf)$), referred as the \emph{poison loss}.
This proposed log difference loss leverages principles from information theory \citep{ash2012information}; maximizing this loss encourages higher certainty (or less \emph{surprises}) in classifying watermarked explanations while lowering the certainty in classifying unwatermarked counterparts. 

Furthermore, to maintain the quality of CF explanations, we aim to minimize the deviation between watermarked and unwatermarked CF explanations on the original proprietary ML models $F_W$ (i.e., $F_W(\cf +\theta) \approx F_W(\cf)$.
Hence, we formalize the \emph{usability} objective by minimizing the Kullback-Leibler (KL) divergence between the probability outputs from the proprietary model $F_W$ (i.e., $KL \left( F_W(\cf+\theta) \parallel F_W(\cf) \right)$), referred as the \textit{validity loss}). This KL-divergence term ensures similar predictions between the watermarked and unwatermarked CF explanations, thereby ensuring the quality of watermarked CF explanations.


Thus, we formulate the watermarking embedding process as this bi-level optimization problem:
\begin{equation}
    \begin{aligned}
        \max_{\theta} & \sum_{(x,y) \in D^\text{cf}} \underbrace{\log\left(\frac{f_{w^*}(x+\theta)}{f_{w^*}(x)}\right)}_\text{Poison Loss}
        - \underbrace{\text{KL} \biggl( F_W(x+\theta) \parallel F_W(x) \biggr)}_{\text{Validity Loss}}, \\ 
        \text{s.t.} \quad & w^*(\theta) = \arg\min_{w} \sum_{(x,y) \in D^x} \mathcal{L}(f_w(x), y) + \sum_{(x,y) \in D^\text{cf}} \mathcal{L}(f_w(x+\theta), y)
    \end{aligned}
    \label{eq:watermark1}
\end{equation}
where $\lambda_1$ and $\lambda_2$ are hyperparameters that balance the two loss functions, $\Delta$ denotes the $l_\infty$-norm ball $\Delta=\{\theta_i \in \mathbb{R}^d | \norm{\theta_i}_\infty \leq \delta\}$, and $\delta$ is a hyperparamter which denotes the maximum perturbation. The inner (min) problem solves the model extraction problem where an adversary uses attack data points $D^x$ and the watermarked CF explanations $D^\text{cf}(\boldsymbol{\theta})$ as training data $D^x \cup D^\text{cf}(\boldsymbol{\theta})$ to extract the proprietary ML model. The outer (max) problem jointly optimizes the \emph{detectability} and \emph{usability} objectives for generating watermarking.

\subsubsection{Improving Generalization and Mitigating Overfitting}
Unfortunately, directly optimizing Eq.~\ref{eq:watermark1} leads to suboptimal generalization and a tendency to overfit (shown in Section~\ref{sec:eval}). To address this problem, we propose to augment this formulation via two key techniques: (i) \emph{regularization}, and (ii) \emph{data augmentation}.



\textbf{Regularization.}
Our preliminary experiments optimizing the bi-level formulation in Eq.~\ref{eq:watermark1} show a \emph{high false positive} rate during the watermark detection stage.
In particular, our detection system (discussed in Section~\ref{sec:detection}) falsely flags benign models that are not trained using watermarked CF explanations. This is an undesirable behavior because it leads to potentially false alarms and misidentification of legitimately trained models; even if a model is trained using normal data collection procedures, it might be falsely flagged as an unauthorized extraction model.

This high false positive rate happens when the selected perturbations $\theta$ lead to high poison loss w.r.t. benign models (i.e., models not trained on watermarked CF explanations $D^\text{cf}(\boldsymbol{\theta})$). 
Essentially, the optimization of $\theta$ overfits to achieve high poison loss on any model, irrespective of whether it is trained on $D^\text{cf}(\boldsymbol{\theta})$ or not (as shown in Section~\ref{sec:eval}).
To mitigate this overfitting problem, we incorporate a regularization term to discourage the optimization of $\theta$ in achieving high poison loss 
exclusively for models trained on $D^\text{cf}(\boldsymbol{\theta})$, thereby reducing the false positive rate.
Specifically, this regularization term minimizes the log difference between model outputs from watermarked and unwatermarked CF explanations from the \emph{benign} ML models.

\textbf{Data Augmentation to Improve Generalizability.}
As an additional measure to prevent overfitting, we enrich the training data of the extracted ML model $f_{w}$ by including sampled data points from the defender's proprietary training dataset $D^t$ (which is feasible since the watermarking generation problem will be solved by the defender who has access to their training dataset).

\subsubsection{CFMark: Our Watermarking Algorithm}
Finally, we derive our new watermarking embedding:
\begin{align}
    &\max_{\boldsymbol{\theta}, \forall \theta_i\in \Delta} \frac{1}{N} \sum_{(x_i,y_i) \in D^\text{cf}} 
    \lambda_1 \underbrace{\log\left(\frac{f_{w^*_1}(x_i+\theta_i)}{f_{w^*_1}(x_i)}\right)}_{\text{Poison Loss}} - \lambda_2 \underbrace{{KL} \biggl( F_W(x_i+\theta_i) \parallel F_W(x_i) \biggr)}_{\text{Validity Loss}} 
    {\color{blue}- \lambda_3 \underbrace{\log\left(\frac{f_{w^*_2}(x_i + \theta_i))}{f_{w^*_2}(x_i)}\right)}_{\text{Regularization}}}, \label{eq:outer} \\
    \text{s.t.} \quad & w_1^* = \arg\min_{w_1}  \!\!\!\! \!\!\!\!  \sum_{(x_i,y_i) \in D^x {\color{purple}\cup D^t} \cup D^\text{cf}(\boldsymbol{\theta})}   \!\!\!\! \!\!\!\! \mathcal{L}(f_{w_1}(x_i), y_i), \label{eq:w1}  \\
    \quad & {\color{blue}w^*_2 = \arg\min_{w_2} \sum_{(x_i,y_i) \in D^x \cup D^t} \mathcal{L}(f_{w_2}(x_i), y_i)}. \label{eq:w2}
\end{align}
where $\lambda_3$ are hyperparameters that balance the regularization strength. Compared to Eq.~\ref{eq:watermark1}, this new formulation highlights the two proposed techniques for mitigating overfitting.
First, we incorporate the \textcolor{blue}{regularization term} to mitigate the overfitting to poison loss on benign models $f_{w_2}$, which do not use $D^\text{cf}(\boldsymbol{\theta})$ for training. Furthermore, we \textcolor{purple}{augment the data} using sampled training data $D^t$ for training $f_{w_1}$ and $f_{w_2}$ to improve the generalizability of watermarks.


The bi-level optimization problem in Eq.~\ref{eq:outer}-\ref{eq:w2} is generally intractable due to its nested structure.
Fortunately, we can efficiently approximate this bi-level formulation by alternating the optimization of the inner- and outer- problems using unrolling pipelines ~\citep{shaban2019truncated, gu2022min}, which has been applied to many ML problems with a bi-level formulation, e.g., meta-learning \citep{finn2017model}, poisoning attacks~\citep{huang2020metapoison}, and distributionally robust optimization \citep{guo2023rocoursenet}.
Algorithm~\ref{alg:watermark} details the optimization procedure for the watermarking embedding $\theta$ defined in Eq.~\ref{eq:outer}-\ref{eq:w2}.
We iteratively solve this bi-level optimization problem via $T$ outer steps. 
Each step begins by updating the weights of the extracted models $w_1$ and benign models $w_2$. This update is performed using $K$ unrolled gradient descent steps (Line~\ref{alg:line:unrolling_start}-\ref{alg:line:unrolling_end}).
Next, we maximize the outer objective function (Line~\ref{alg:line:outer}) and project $\theta$ into the feasible region $\Delta$ (Line~\ref{alg:line:proj}).
Importantly, when computing the gradient of the outer objective function w.r.t. the watermarking $\theta$ (Line~\ref{alg:line:outer}), we look ahead several forward steps in the inner problem (Line~\ref{alg:line:unrolling_start}-\ref{alg:line:unrolling_end}), then backpropagate the gradient to the initial unrolling step (Line~\ref{alg:line:outer}). 
This look-ahead mechanism traces the gradients back to the model unrolling stages, allowing for a more accurate approximation of $\theta$.


\begin{algorithm}[h]
\caption{Watermarking Algorithm}
\label{alg:watermark}
    \begin{algorithmic}[1]
    \State \textbf{Hyperparameters:} step size $\alpha$, \# of watermarking steps $T$, \# of unrolling steps $K$, maximum perturbation $\Delta$.
    \State \textbf{Input: } A batch of inputs $(x, y) \in {D}^x$ and their corresponding unwatermarked CF explanations $(x, y) \in {D}^\text{cf}$, and sampled training data $D^t$.
    \State \textbf{Initialize: } Init $\theta$ with zeros
    \For{$i = 1\rightarrow T$ steps}    
        \For{$k = 1 \rightarrow K$ unroll steps}\label{alg:line:unrolling_start}
            \State Update $w_1$ (Eq.~\ref{eq:w1}), $w_2$ (Eq.~\ref{eq:w2}) using Adam\label{alg:line:unrolling}
        \EndFor\label{alg:line:unrolling_end}
        \State $\boldsymbol{\theta} \leftarrow \boldsymbol{\theta} + \alpha \cdot sign\left(\nabla_{\theta} {L}\right)$ \Comment{$L := $ Eq.~\ref{eq:outer}} \label{alg:line:outer}
        \State Project $\boldsymbol{\theta}$ onto the $l_\infty$-norm ball. \label{alg:line:proj}
    \EndFor\\
    \Return $\boldsymbol{\theta}$
\end{algorithmic}
\end{algorithm}

\subsection{Stage 2: Ownership Verification}
\label{sec:detection}
This section details the verification process for determining if a suspicious model has been trained on watermarked CF explanations.
Assuming access to the predicted probability from the suspicious model,
we employ a null hypothesis significance testing (NHST) scheme (Proposition \ref{prop:ttest}) to identify unauthorized model extractions using watermarked CF explanations.

\begin{proposition}
\label{prop:ttest}
Suppose $p_x$ is the posterior probability of $x$ predicted by the suspicious model. 
Let $\cf$ and  $\hat{x}^{\text{cf}}$ each represent the unwatermarked and watermarked counterfactual explanations.
Let $\bar{p}_{x^{\text{cf}}}$ and  $\bar{p}_{\hat{x}^{\text{cf}}}$ each denote the mean of the posterior probabilities $p_{x^{\text{cf}}}$ and $p_{\hat{x}^{\text{cf}}}$ over $n$ observations. Given the null hypothesis $H_0: \bar{p}_{\hat{x}^{\text{cf}}} = \bar{p}_{x^{\text{cf}}} + \tau$ ($H_1: \bar{p}_{\hat{x}^{\text{cf}}}>\bar{p}_{x^{\text{cf}}} + \tau $), where $\tau$ is a hyper-parameter, we claim that the suspicious model is trained on counterfactual explanations (with $\tau$-certainty) if and only if $H_0$ is rejected.
\end{proposition}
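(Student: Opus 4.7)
The plan is to justify a paired (pairwise) $t$-test as the vehicle for ownership verification. First I would introduce the per-sample differences $d_i := p_{\wcf_i} - p_{\cf_i}$ for $i=1,\dots,n$; the pairing is natural because each $\wcf_i = \cf_i + \theta_i$ is derived from the same underlying input as $\cf_i$, so the $d_i$ are exchangeable and their variance is typically much smaller than the variance of either arm --- which is exactly why the paired test (rather than a two-sample test) is the right vehicle here. Under standard regularity (finite second moments, independence across the $n$ observations), either an exact normal-errors argument or the central limit theorem yields that the studentized statistic
\begin{equation}
    t \;=\; \frac{\bar{d} - \tau}{s_d/\sqrt{n}}
\end{equation}
follows (asymptotically) a Student's $t_{n-1}$ distribution under $H_0: \mathbb{E}[d_i] = \tau$, where $s_d^2$ denotes the sample variance of $\{d_i\}$.

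Next I would derive the rejection rule: for a chosen significance level $\alpha$, reject $H_0$ iff $t > t_{1-\alpha,\,n-1}$. This directly bounds the Type-I error (false accusations against benign models) by $\alpha$, which is the formal meaning I would attach to the ``$\tau$-certainty'' phrasing. The substantive content of the proposition is then an interpretation step tied back to the embedding objective: by construction of the bi-level program in Eq.~\ref{eq:outer}, the perturbation $\theta$ is optimized so that, on an extracted model $f_{w_1^*}$ trained with $D^\text{cf}(\boldsymbol{\theta})$, the poison-loss term drives $\mathbb{E}[p_{\wcf}-p_{\cf}]$ strictly above $\tau$, while the regularization term tied to Eq.~\ref{eq:w2} keeps this expected gap at most $\tau$ for benign models $f_{w_2^*}$. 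Rejection of $H_0$ therefore corresponds, with Type-I error at most $\alpha$, to the claim that the suspect model was trained on the watermarked CFs.

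The main obstacle will be the ``only if'' direction, since strictly speaking the proposition couples a statistical decision with a statement about the training-data-generating process. The cleanest way to handle this is to read it as a contrapositive powered by the regularizer: if the model were not trained on $D^\text{cf}(\boldsymbol{\theta})$, the constraint enforced by the regularization term in Eq.~\ref{eq:outer} keeps the population mean of $d_i$ inside the null region, so $H_0$ cannot be rejected except on the $\alpha$-probability Type-I event. I would also want to state explicitly the mild conditions (finite variance of $d_i$, sufficiently large $n$, or approximate normality of the per-sample differences) under which the $t$-approximation is accurate; absent these, I would fall back on a distribution-free analogue such as a paired permutation test or a Wilcoxon signed-rank test to obtain the same conclusion with the same Type-I guarantee but without Gaussian assumptions.
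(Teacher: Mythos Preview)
Your statistical core is essentially the paper's own argument: in the appendix the authors justify the verification scheme by defining the paired difference $d_i = p_{\hat{x}_i^{\text{cf}}} - p_{x_i^{\text{cf}}}$, invoking the Central Limit Theorem to get asymptotic normality of $\bar{d}$, forming the studentized statistic $T=\sqrt{n}(\bar{d}-\tau)/\sigma_d$, and reading off the one-sided rejection rule $T>t_{1-\alpha}$. That is exactly your first two paragraphs, so on the main derivation you are aligned with the paper.

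Where you diverge is scope. The paper does not actually prove Proposition~\ref{prop:ttest} as a biconditional; it treats the proposition as the \emph{definition} of the decision rule and then proves Theorem~\ref{theorem:hypo-test}, which is purely the algebraic rejection condition $\sqrt{n^2-n}(\bar{d}-\tau) - t_{1-\alpha}\sqrt{\tilde{d}+n\bar{d}^2}>0$. Your attempt to justify the ``only if'' direction by appealing to the regularization term in Eq.~\ref{eq:outer} and reading it as a contrapositive is a reasonable heuristic narrative, but it is not something the paper establishes (nor could it be made rigorous without strong assumptions on the attacker's model class and training dynamics). So your proposal is correct for what the paper actually proves, and the extra interpretive layer you add --- while sensible and more honest about the gap --- goes beyond the paper's own treatment rather than matching a proof that exists there.
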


In practice, we randomly sample $N$ data points from the test sets to conduct this pairwise t-test. We reject the null hypothesis $H_0$ if the resulting p-value is below a predetermined significance level $\alpha$ (we set $\alpha=0.05$).
Finally, we provide a theoretical analysis to establish the conditions for rejection of $H_0$ at significance level $\alpha$ in Theorem~\ref{theorem:hypo-test}. The proof is provided in Appendix~\ref{appendix:proof}.

\begin{theorem}
    \label{theorem:hypo-test}
    We define the following quantities: $\bar{d} = \mathbf{E}(p_{\hat{x}^\text{cf}} - p_{x^\text{cf}})$ and $\tilde{d} = \sum_{i=1}^{n}(p_{\hat{x_i}^\text{cf}} - p_{x_i^\text{cf}})^2$. We claim that defenders owners can reject the null hypothesis $H_0: \bar{p}_{\hat{x}^{\text{cf}}} = \bar{p}_{x^{\text{cf}}} + \tau$ (versus $H_1: \bar{p}_{\hat{x}^{\text{cf}}}>\bar{p}_{x^{\text{cf}}} + \tau $) at significance level $\alpha$, if $\bar{d}$ and $\tilde{d}$ satisfy that
    \begin{align}
    \sqrt{n^2-n}(\bar{d}-\tau) - t_{1-\alpha}\sqrt{\tilde{d}+n\bar{d}^2} >0
    \end{align}
    where $\tau$ is the level of certainty and  $t_{1-\alpha}$ is the (1-$\alpha$)-quantile of t-distribution with $n-1$ degrees of freedom and $n$ is the sample size of $x^{cf}$. 
\end{theorem}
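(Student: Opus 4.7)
The plan is to recognize this as a standard one-sided paired t-test and unpack its rejection criterion into the algebraic form stated in the theorem. I would begin by defining the paired differences $d_i := p_{\hat{x}_i^{\text{cf}}} - p_{x_i^{\text{cf}}}$ for $i = 1,\dots,n$. With this notation, $\bar{d}$ in the theorem is the sample mean $\bar{d} = \frac{1}{n}\sum_{i=1}^{n} d_i$, and the quantity $\tilde{d} = \sum_{i=1}^n d_i^2$ is the uncentered sum of squares. Under $H_0$ the differences have mean $\tau$ (and I would invoke the standard paired t-test assumption that the $d_i$ are approximately normal, or appeal to the CLT for $n$ not too small), so the statistic
\begin{equation*}
    t \;=\; \frac{\bar{d} - \tau}{s_d/\sqrt{n}}, \qquad s_d^2 \;=\; \frac{1}{n-1}\sum_{i=1}^n (d_i - \bar{d})^2,
\end{equation*}
follows a Student t-distribution with $n-1$ degrees of freedom.

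Next I would write down the one-sided rejection rule: $H_0$ is rejected at level $\alpha$ in favor of $H_1$ exactly when $t > t_{1-\alpha}$, i.e.\ when
\begin{equation*}
    (\bar{d} - \tau)\sqrt{n} \;>\; t_{1-\alpha}\, s_d.
\end{equation*}
The remaining work is purely algebraic: I would rewrite the sample variance in terms of $\tilde{d}$ using the identity $\sum_{i=1}^n (d_i - \bar{d})^2 = \sum_{i=1}^n d_i^2 - n\bar{d}^2 = \tilde{d} - n\bar{d}^2$, so that $s_d = \sqrt{(\tilde{d} - n\bar{d}^2)/(n-1)}$. Substituting this into the rejection inequality, multiplying both sides by $\sqrt{n-1}$, and using $\sqrt{n}\cdot\sqrt{n-1} = \sqrt{n^2 - n}$, one obtains
\begin{equation*}
    \sqrt{n^2 - n}\,(\bar{d} - \tau) \;-\; t_{1-\alpha}\sqrt{\tilde{d} - n\bar{d}^2} \;>\; 0,
\end{equation*}
which matches the claimed inequality (up to what appears to be a sign convention on the $n\bar{d}^2$ term in the statement; I would flag this when presenting the final step and present the derivation that falls naturally out of the standard variance identity).

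The derivation is essentially routine once the paired-test framing is accepted, so I do not expect a serious obstacle in the algebra. The main conceptual point to be careful about is justifying the use of the t-distribution: strictly speaking this requires the $d_i$ to be i.i.d.\ normal under $H_0$, whereas the probabilities $p_x$ lie in $[0,1]$ and are bounded. I would address this by noting that in the NHST protocol of Proposition~\ref{prop:ttest} the $n$ sampled test points are drawn independently, so a CLT argument makes the t-statistic asymptotically valid, and for the finite-$n$ claim we adopt the standard paired t-test assumption as in the literature on hypothesis-test-based watermark verification. The rest is bookkeeping: collecting terms and reading off the stated inequality.
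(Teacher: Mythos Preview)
Your proposal is correct and follows essentially the same route as the paper: frame the problem as a one-sided paired $t$-test on the differences $d_i$, invoke the CLT to justify the $t$-distribution, write the rejection rule $\sqrt{n}(\bar d-\tau)/s_d>t_{1-\alpha}$, and substitute the sample variance in terms of $\tilde d$. Your flagged sign discrepancy is in fact a slip in the paper itself---the standard identity gives $\sum_i(d_i-\bar d)^2=\tilde d-n\bar d^2$, whereas the paper's statement and proof both carry $\tilde d+n\bar d^2$---so your derivation is the correct one.
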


\section{Experimental Evaluation}
\label{sec:eval}

\textbf{Datasets.}
We evaluate the performance of our watermarking framework using four real-world datasets:
(i) \emph{Cancer} \citep{Dua:2019} contains 569 instances and uses cell nuclei characteristics to classify tumors as malignant (\verb|Y=1|) or benign (\verb|Y=0|). 
(ii) \emph{Credit} dataset \citep{yeh2009comparisons} contains 30,000 instances and predicts whether a borrower will default on their payments (\verb|Y=1|) or not (\verb|Y=0|) based on historical payment records.
(iii) \emph{HELOC} dataset \citep{heloc} collects anonymized Home Equity Line of Credit applications from real homeowners. This contains 10,459 instances, and the classifier predicts whether an applicant will repay their HELOC account within 2 years (\verb|Y=1|) or not (\verb|Y=0|) based on their application information.
(iv) \emph{Loan} dataset \citep{li2018should} contains $\sim$450k loan approval records across the U.S. from 1994 to 2009, and predicts whether a business defaulted on a loan (\verb|Y=1|) or not (\verb|Y=0|).

\textbf{Attacker Models \& CF Methods.}
We evaluate CFMark against three model extraction methods:
(i) \emph{Querying} attack \citep{tramer2016stealing} does not use CF explanations; instead it only uses the inputs and predictions pair $D^x$ for training the extracted ML model.
(ii) \emph{MRCE} \citep{aivodji2020model} adopts both inputs $D^x$ and the corresponding watermarked CF explanations for training the extracted ML model.
(iii) \emph{DualCF} \citep{wang2022dualcf} adopts both watermarked CF explanations $D^\text{cf}(\boldsymbol{\theta})$ and their dual CF explanations $D^\text{cf}(\boldsymbol{\theta})'$ for training the extracted ML model.



We use three widely used CF methods for benchmarking: 
(i) \emph{C-CHVAE} \citep{pawelczyk2020learning} is a parametric approach that generates CF explanations by perturbing the latent variables of a Variational Autoencoder (VAE) model.
(ii) \emph{DiCE} \citep{mothilal2020explaining} is a non-parametric method that generates diverse CF explanations. 
(iii) \emph{GrowingSphere} \citep{laugel2017inverse} is another non-parametric method that employs a random search algorithm to find valid recourses. 

\begin{figure}
    \centering
    \includegraphics[width=0.8\columnwidth]{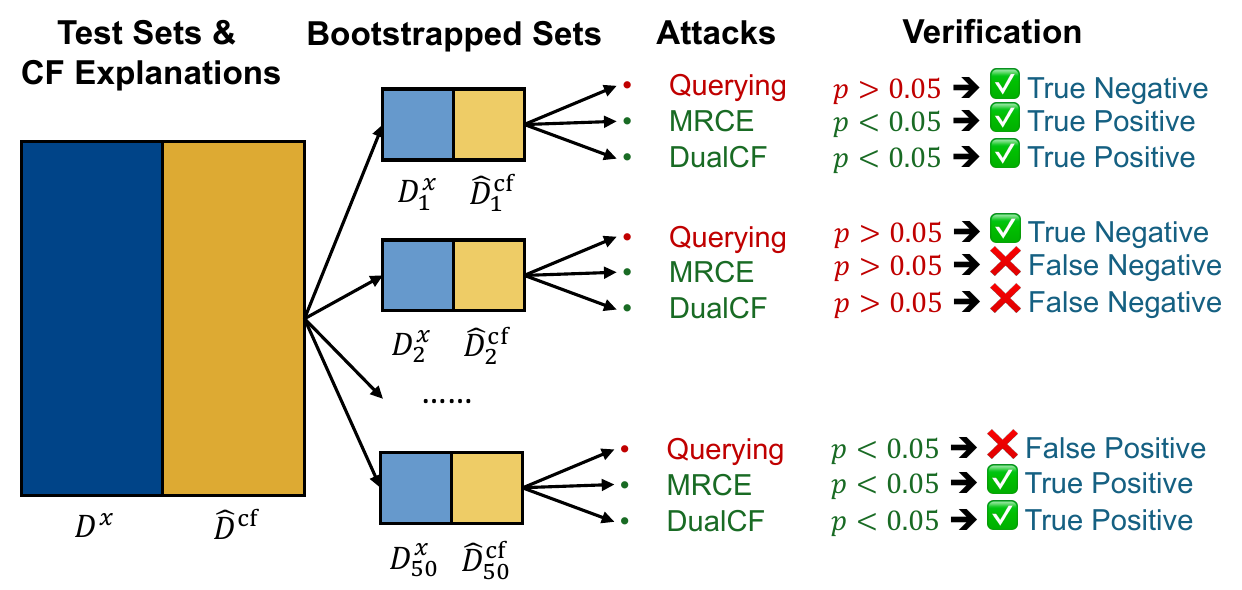}
    \caption{Illustration of the evaluation procedure for watermarking. \textcolor{teal}{Green} indicates positive cases, and \textcolor{red}{red} indicates negative cases.}
    \vspace{-5pt}
    \label{fig:eval-procedure}
\end{figure}

\textbf{Evaluation Procedure \& Metrics.}
Figure~\ref{fig:eval-procedure} illustrates the evaluation procedure of the experiment.
We use a bootstrapping approach to evaluate CFMark's ability to safeguard unauthorized model extraction attacks. 
Specifically, we first partition each dataset into train/test set splits. We train the proprietary model $F_W$ from the training set, and consider the test set as a potential attack set.
Next, we create 50 bootstrap subsets from the original test set. Each of these subsets was exposed to a model extraction attack. As a result, there are 50 extracted models for each attack method, CF explanation, and dataset combination. 
Extracted models trained on watermarked CF explanations (e.g., MRCE, DualCF) are considered positive, while those trained without (e.g., Querying Attack) are negative.
Next, for each extracted model, we perform the ownership verification procedure (in Proposition~\ref{prop:ttest}). Specifically, our framework outputs \emph{positive} (i.e., flags a model as being trained on watermarked CFs) if the $p$-value of the ownership verification is less than 0.05. Otherwise, we consider the output of our detection system as \textit{negative}.
We calculate true/false positives/negatives by comparing our ownership verification results with the ground-truth values of these attacked models. 
In total, we experiment with 1,800 extracted models ($50$ subsets $\times $ 3 attacks $\times$ 4 datasets $\times$ 3 CF methods) to rigorously quantify our watermarking framework's ability to identify unauthorized model extraction attacks.

To evaluate the effectiveness of CFMark, we report the \emph{F-1} score on measuring how accurate CFMark is in identifying models trained on watermarked or unwatermarked CF explanations.
Furthermore, we use two widely used metrics to evaluate the quality of CF explanations \citep{wachter2017counterfactual,mothilal2020explaining, guo2023counternet}: 
(i) \emph{Validity}, which measures the fraction of valid CF explanations $\cf$ with respect to $F_W$;
(ii) \emph{Proximity}, which computes the $l_1$ distance between the input $x$ and its CF explanation $\cf$.

\begin{table}[t]
\centering
\small
\caption{\label{tab:result}F1 score of CFMark in identifying model extraction attacks using watermarked CF explanations. }
\setlength{\tabcolsep}{3.5pt}
\begin{tabular}{l|cccc}
\toprule
\textbf{CF Method} & \multicolumn{1}{l}{\textbf{Cancer}} & \multicolumn{1}{l}{\textbf{Credit}} & \multicolumn{1}{l}{\textbf{HELOC}} & \multicolumn{1}{l}{\textbf{Loan}} \\ \midrule\midrule
C-CHVAE & 0.90 & 0.52 & 0.87 & 0.90 \\
DiCE & 0.95 & 0.98 & 0.93 & 0.93 \\
Growing Sphere & 1.00 & 0.92 & 0.89 & 0.83 \\ \bottomrule
\end{tabular}
\end{table}


\subsection{Evaluation Results}
\label{sec:eval-results}
\textbf{Watermarking Performance.}
Table~\ref{tab:result} shows the effectiveness of CFMark in identifying model extraction attacks using CF explanations. 
In particular, across different datasets and CF methods, CFMark can accurately identify true positives and true negatives, achieving an average F1-score of 0.95, 0.91, and 0.80, across DiCE, Growing Sphere, and C-CHVAE, respectively.
This result underscores CFMark's performance in identifying unauthorized usage of CF explanations in model extraction attacks, while not misjudging models that do not use CF explanations for training models.


\begin{table}[t]
\centering
\scriptsize
\caption{\label{tab:cf}Evaluation of the CF Explanations. Watermarked CF explanations (i.e., WM.) achieve comparative validity (i.e., Val.) and proximity (i.e., Prox.) as their unwatermarked counterparts (i.e., Original).}
\setlength{\tabcolsep}{1.5pt}
\begin{tabular}{ll|ll|ll|ll|ll}
\toprule
\multicolumn{2}{l|}{} & \multicolumn{2}{c|}{\textbf{Cancer}} & \multicolumn{2}{c|}{\textbf{Credit}} & \multicolumn{2}{c|}{\textbf{HELOC}} & \multicolumn{2}{c}{\textbf{Loan}} \\
\multicolumn{2}{l|}{\multirow{-2}{*}{\textbf{CF Method}}} & \cellcolor[HTML]{DAE8F5}Val. & \cellcolor[HTML]{D8F0D3}Prox. & \cellcolor[HTML]{DAE8F5}Val. & \cellcolor[HTML]{D8F0D3}Prox. & \cellcolor[HTML]{DAE8F5}Val. & \cellcolor[HTML]{D8F0D3}Prox. & \cellcolor[HTML]{DAE8F5}Val. & \cellcolor[HTML]{D8F0D3}Prox. \\ \midrule\midrule
 & \cellcolor[HTML]{FFFFFF}Original & \cellcolor[HTML]{FFFFFF}1.0 & \cellcolor[HTML]{FFFFFF}3.23 & \cellcolor[HTML]{FFFFFF}1.0 & \cellcolor[HTML]{FFFFFF}4.40 & \cellcolor[HTML]{FFFFFF}1.0 & \cellcolor[HTML]{FFFFFF}3.73 & \cellcolor[HTML]{FFFFFF}1.0 & \cellcolor[HTML]{FFFFFF}6.90 \\
 & \cellcolor[HTML]{FFFFFF}WM. & \cellcolor[HTML]{FFFFFF}0.99 & \cellcolor[HTML]{FFFFFF}3.27 & \cellcolor[HTML]{FFFFFF}0.98 & \cellcolor[HTML]{FFFFFF}4.42 & \cellcolor[HTML]{FFFFFF}0.94 & \cellcolor[HTML]{FFFFFF}3.43 & \cellcolor[HTML]{FFFFFF}1.0 & \cellcolor[HTML]{FFFFFF}6.92 \\
\multirow{-3}{*}{CCHVAE} & \cellcolor[HTML]{EFEFEF}Change (\%) & \cellcolor[HTML]{EFEFEF}1.01 & \cellcolor[HTML]{EFEFEF}1.23 & \cellcolor[HTML]{EFEFEF}2.02 & \cellcolor[HTML]{EFEFEF}0.45 & \cellcolor[HTML]{EFEFEF}6.19 & \cellcolor[HTML]{EFEFEF}8.38 & \cellcolor[HTML]{EFEFEF}0 & \cellcolor[HTML]{EFEFEF}0.29 \\ \midrule
 & \cellcolor[HTML]{FFFFFF}Original & \cellcolor[HTML]{FFFFFF}0.99 & \cellcolor[HTML]{FFFFFF}4.21 & \cellcolor[HTML]{FFFFFF}.90 & \cellcolor[HTML]{FFFFFF}3.98 & \cellcolor[HTML]{FFFFFF}0.90 & \cellcolor[HTML]{FFFFFF}8.39 & \cellcolor[HTML]{FFFFFF}0.67 & \cellcolor[HTML]{FFFFFF}9.01 \\
 & \cellcolor[HTML]{FFFFFF}WM. & \cellcolor[HTML]{FFFFFF}0.99 & \cellcolor[HTML]{FFFFFF}4.34 & \cellcolor[HTML]{FFFFFF}.90 & \cellcolor[HTML]{FFFFFF}4.05 & \cellcolor[HTML]{FFFFFF}0.89 & \cellcolor[HTML]{FFFFFF}8.43 & \cellcolor[HTML]{FFFFFF}0.67 & \cellcolor[HTML]{FFFFFF}9.03 \\
\multirow{-3}{*}{DiCE} & \cellcolor[HTML]{EFEFEF}Change (\%) & \cellcolor[HTML]{EFEFEF}0 & \cellcolor[HTML]{EFEFEF}3.04 & \cellcolor[HTML]{EFEFEF}0 & \cellcolor[HTML]{EFEFEF}1.74 & \cellcolor[HTML]{EFEFEF}1.12 & \cellcolor[HTML]{EFEFEF}0.48 & \cellcolor[HTML]{EFEFEF}0 & \cellcolor[HTML]{EFEFEF}0.22 \\ \midrule
 & \cellcolor[HTML]{FFFFFF}Original & \cellcolor[HTML]{FFFFFF}1.0 & \cellcolor[HTML]{FFFFFF}3.50 & \cellcolor[HTML]{FFFFFF}0.99 & \cellcolor[HTML]{FFFFFF}5.18 & \cellcolor[HTML]{FFFFFF}1.0 & \cellcolor[HTML]{FFFFFF}3.67 & \cellcolor[HTML]{FFFFFF}1.0 & \cellcolor[HTML]{FFFFFF}7.57 \\
 & \cellcolor[HTML]{FFFFFF}WM. & \cellcolor[HTML]{FFFFFF}0.98 & \cellcolor[HTML]{FFFFFF}3.52 & \cellcolor[HTML]{FFFFFF}0.98 & \cellcolor[HTML]{FFFFFF}5.21 & \cellcolor[HTML]{FFFFFF}0.98 & \cellcolor[HTML]{FFFFFF}3.73 & \cellcolor[HTML]{FFFFFF}1.0 & \cellcolor[HTML]{FFFFFF}7.62 \\
\multirow{-3}{*}{\begin{tabular}[c]{@{}l@{}}Growing\\ Sphere\end{tabular}} & \cellcolor[HTML]{EFEFEF}Change (\%) & \cellcolor[HTML]{EFEFEF}2.02 & \cellcolor[HTML]{EFEFEF}0.57 & \cellcolor[HTML]{EFEFEF}1.02 & \cellcolor[HTML]{EFEFEF}0.58 & \cellcolor[HTML]{EFEFEF}2.02 & \cellcolor[HTML]{EFEFEF}1.62 & \cellcolor[HTML]{EFEFEF}0 & \cellcolor[HTML]{EFEFEF}0.66 \\ \bottomrule
\end{tabular}
\vspace{-5pt}
\end{table}

\textbf{Validity \& Proximity.}
Table~\ref{tab:cf} compares the validity and proximity of original (i.e., unwatermarked) $\cf$ and watermarked CF explanations $\hat{x}^\text{cf}$. It shows that watermarking CF explanations only leads to a minor degradation in quality. Specifically, compared with their unwatermarked counterparts, watermarked CF explanations exhibit an average 1.3\% decrease in validity and a 1.6\% increase in proximity across all datasets. 
This negligible degradation suggests that watermarking CF explanations provides a robust defense mechanism while preserving the utility of the explanations.

\begin{figure}[t!]
     \centering
     \begin{subfigure}[h]{0.49\columnwidth}
         \centering
         \includegraphics[width=0.98\textwidth]{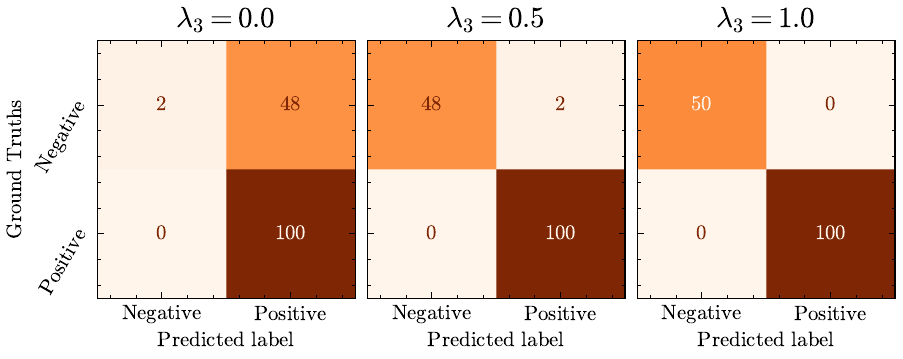}
         \caption{Confusion matrix when varying $\lambda_3$.}
         \label{fig:ablation-reg-cm}
     \end{subfigure}
     \hfill
     \begin{subfigure}[h]{0.49\columnwidth}
         \centering
         \includegraphics[width=0.98\textwidth]{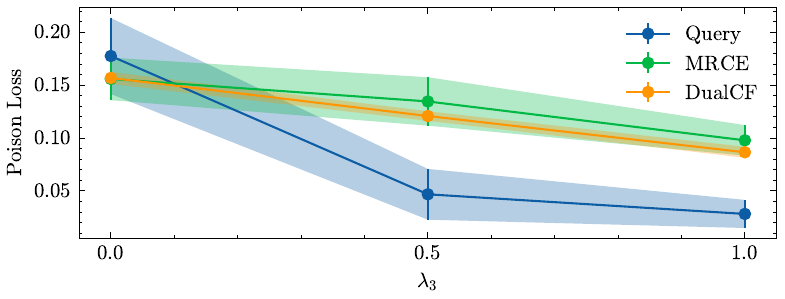}
         \vspace{-5pt}
         \caption{Poison loss when varying $\lambda_3$.}
         \label{fig:ablation-reg-plot}
     \end{subfigure}
     \caption{The impact of the \emph{regularization} term on the credit dataset when using DiCE.}
     \vspace{-5pt}
    \label{fig:ablation-reg}
\end{figure}

\begin{table*}[t]
\caption{Ablations of the loss functions of CFMark on \emph{credit} dataset when using DiCE. Val. (\%) and Prox. (\%) measure the validity decreases and proximity increases from unwatermarked to watermark CF explanations, respectively. High F1, low Val. (\%), and low Prox. (\%) are desirable.} \label{tab:ablation-loss}
\scriptsize
\centering
\begin{tabular}{@{}l|ccccccccc@{}}
\toprule
 & \multicolumn{9}{c}{\textbf{Validity Loss}} \\ \cmidrule(l){2-10} 
 & \multicolumn{3}{c|}{\textit{Log Diff}} & \multicolumn{3}{c|}{\textit{KL}} & \multicolumn{3}{c}{\textit{Residual}} \\
\multirow{-3}{*}{\textbf{Poison Loss}} & \cellcolor[HTML]{FEE9D4}F1 & \cellcolor[HTML]{DAE8F5}Val. (\%) & \multicolumn{1}{c|}{\cellcolor[HTML]{D8F0D3}Prox. (\%)} & \cellcolor[HTML]{FEE9D4}F1 & \cellcolor[HTML]{DAE8F5}Val. (\%) & \multicolumn{1}{c|}{\cellcolor[HTML]{D8F0D3}Prox. (\%)} & \cellcolor[HTML]{FEE9D4}F1 & \cellcolor[HTML]{DAE8F5}Val. (\%) & \cellcolor[HTML]{D8F0D3}Prox. (\%) \\ \midrule
\textit{Log Diff} & 0 & 0.02 & \multicolumn{1}{c|}{1.24} & \cellcolor[HTML]{EFEFEF}0.97 & \cellcolor[HTML]{EFEFEF}-0.30 & \multicolumn{1}{c|}{\cellcolor[HTML]{EFEFEF}1.76} & 0 & -0.23 & 1.42 \\
\textit{KL} & 0 & 4.49 & \multicolumn{1}{c|}{-0.20} & 0.00 & 1.44 & \multicolumn{1}{c|}{0.69} & 0 & 5.09 & -0.18 \\
\textit{Residual} & 0 & 0.32 & \multicolumn{1}{c|}{0.69} & 1.0 & -0.31 & \multicolumn{1}{c|}{1.90} & 0 & 0.07 & 1.32 \\ \bottomrule
\end{tabular}\end{table*}

\subsection{Further Analysis}
Due to space constraints, our ablation analysis focuses on evaluating the Credit dataset when using DiCE. Further ablation studies are included in the Appendix.

\textbf{The Impact of Regularization.}
Figure~\ref{fig:ablation-reg} illustrates the importance of the \emph{regularization} term in Eq.~\ref{eq:outer} by varying the trade-off parameter $\lambda_3$.
As shown in Figure~\ref{fig:ablation-reg-cm}, excluding the regularization term (i.e., $\lambda_3 = 0$) results in a high false positive rate of 96\% (48/50), which suggests that Query attacks are falsely identifying as positive cases.
This high false positive issue is due to overfitting of the poison loss when $\lambda_3=0$, where the poison loss for Query attacks is as high as that for MRCE and DualCF (Figure~\ref{fig:ablation-reg-plot}). This result highlights the overfitting challenges when optimizing the watermark $\theta$ without regularization.

On the other hand, increasing the value of $\lambda_3$ in Figure~\ref{fig:ablation-reg} leads to a significant decrease in the false positive rate and poison loss of Query attacks.  In particular, the false positive rate is drastically dropped to 4\% (2/50) when $\lambda_3 = 0.5$, and is further dropped to 0 when $\lambda_3 = 1.0$ (Figure~\ref{fig:ablation-reg-cm}).
This highlights the effectiveness of regularization in mitigating overfitting.
Notably, this regularization term has minimal impact on the detectability of watermarked CF explanations, as evidenced by the consistent perfect true positive rate even with increasing $\lambda_3$ values.


\textbf{The Impact of Data Augmentation.}
Figure~\ref{fig:ablation-dt} shows the impact of data augmentation (i.e., using sampled training data $D_t$ to train the extracted model) on poison loss against the MRCE attack. This figure shows that using data augmentation helps achieve significantly better performance against the C-CHVAE explanation method, while it does not make a significant difference against DiCE and Growing Sphere.



\begin{figure}[t!]
     \centering
     \includegraphics[width=0.8\columnwidth]{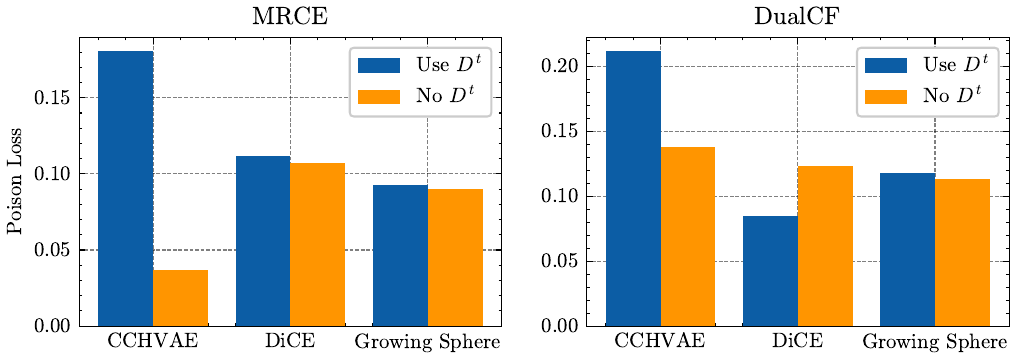}
     \caption{The impact of using training data $D^t$ in CFMark when evaluated on credit dataset.}
    \label{fig:ablation-dt}
    \vspace{-5pt}
\end{figure}

\textbf{Ablations of Loss Functions.}
We analyze three loss function ablations to highlight the design choice of loss functions in Eq.~\ref{eq:outer}.
In particular, both the poison and validity loss can adopt alternate functional forms. We experiment with three functional forms for each objective: (i) the logarithmic difference (i.e., $\log(f_{w^*}(x+\theta) - \log(f_{w^*}(x))$), used for poison loss and the regularization term; (ii) the KL Divergence (i.e., $\text{KL}(f_{w^*}(x+\theta) \parallel f_{w^*}(x)$), used for the validity term; and (iii) the residual difference (i.e., $f_{w^*}(x+\theta) - f_{w^*}(x)$), another reasonable loss function in our setting.
Table~\ref{tab:ablation-loss} reports the F1-score and the usability degradation (as measured by validity decreases (in \%), and proximity increases (in \%)) on different loss functions. The actual loss function combination used in the paper 
is represented in Gray. 
Importantly, compared with other ablations, our choice of loss functions achieves a high F1 score (i.e., 0.97) while maintaining minimal degradation in validity and proximity (i.e., less than 2\%).
Furthermore, while using residual for poisoning and KL divergence for validity loss slightly improves F1 scores, it also leads to slightly increased proximity.


\begin{figure}[t!]
     \centering
     \vspace{-5pt}
     \begin{subfigure}[h]{0.49\columnwidth}
         \centering
         \includegraphics[width=0.98\textwidth]{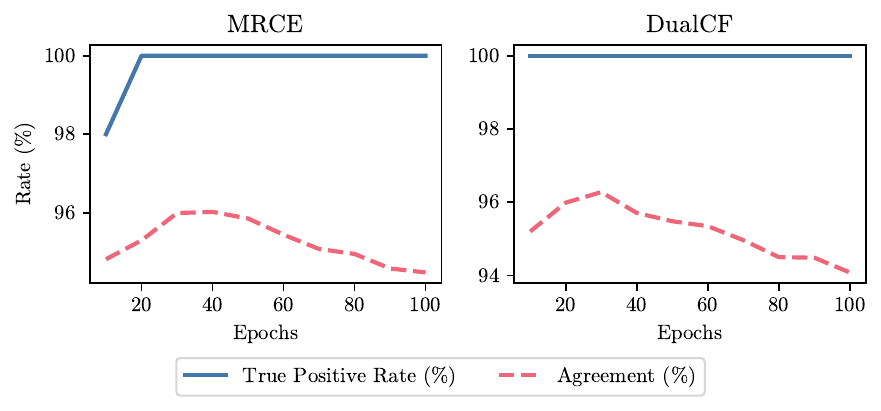}
         \caption{Robustness of CFMark to fine-tuning.}
         \label{fig:rob-ft}
     \end{subfigure}
     \hfill
     \begin{subfigure}[h]{0.49\columnwidth}
         \centering
         \includegraphics[width=0.98\textwidth]{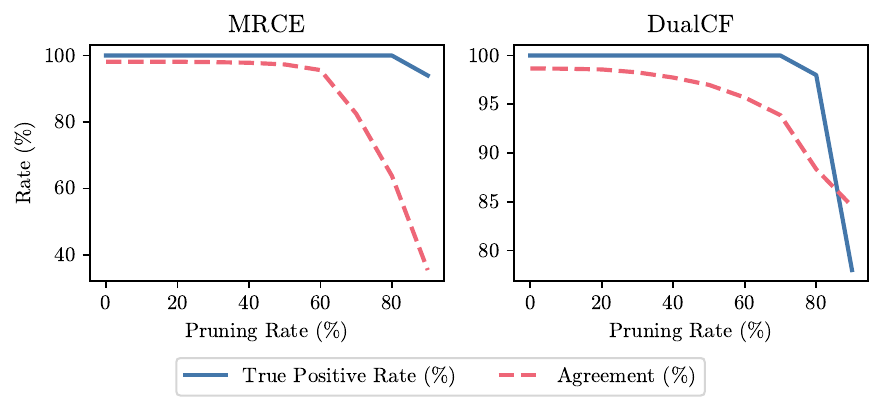}
         \caption{Robustness of CFMark to model pruning.}
         \label{fig:rob-prune}
     \end{subfigure}
     \caption{Robustness of CFMark to backdoor removal.}
    \label{fig:rob}
\end{figure}

\textbf{Robustness to Backdoor Defense.}
We further study the robustness of CFMark against potential attacks by adversaries to remove these watermarks to avoid future detectability. Prior research has demonstrated that fine-tuning \citep{liu2017neural, liu2018fine} and model pruning \citep{wu2021adversarial, liu2018fine} are two common watermark removal techniques. Therefore, we evaluate the resilience of CFMark's watermarks against these two methods, as demonstrated in Figure~\ref{fig:rob}. We use two key metrics: \emph{true positive rate}, which measures the effectiveness of watermark detection after applying the watermark removal technique, and \emph{agreement}, which measures the percentage of times the extracted model $f_w$ and proprietary model $F_W$ are in agreement (i.e., they output the same labels). This \emph{agreement} metric reflects the effectiveness of the model extraction attack.

Figure~\ref{fig:rob-ft} shows that CFMark watermarks are robust to model fine-tuning. The near-perfect true positive rate for both MRCE and DualCF attacks indicates strong resistance to this removal technique, even with increasing epochs of fine-tuning. Similarly, Figure~\ref{fig:rob-prune} highlights CFMark's robustness to model pruning when increasing the pruning rate.
This figure shows that CFMark achieves a consistently high true positive rate (over 75\%) for MRCE and DualCF attack detection even under reasonably high pruning rates ($\sim$60\%). Further, we observe that increasing the pruning rate to 60\% deteriorates the agreement metric; however, we argue that attackers would typically avoid high pruning rates due to the resulting utility loss in the extracted model. This result further underscores the robustness of CFMark to backdoor removal techniques.






\section{Discussion \& Conclusion}
\label{sec:discussion}

In this paper, we propose CFMark, the \emph{first} watermarking framework for counterfactual explanations to identify unauthorized model extraction attacks. We formulate this watermarking framework as a bi-level optimization problem, which embeds an indistinguishable watermark into the CF explanations. 
These watermarks can be subsequently detected using a pairwise t-test to identify unauthorized usage of CF explanations in extracting proprietary models.
Furthermore, we establish a theoretical foundation for the effectiveness of our verification procedure.
Empirical results show that CFMark can achieve robust identifiability, without compromising the quality of CF explanations.

\bibliography{ref}
\bibliographystyle{apalike}


\appendix

\section{Proof}
\label{appendix:proof}

\begin{theorem}
    Let $p_x$ denote the posterior probability of $x$ as predicted by the suspicious model. Let $\cf$ and  $\hat{x}^{\text{cf}}$ each represent the unwatermarked and watermarked counterfactual explanations. Let $
    \bar{p}_{x^{\text{cf}}}$ and  $\bar{p}_{\hat{x}^{\text{cf}}}$ each denote the mean of the posterior probabilities of $p_{x^{\text{cf}}}$ and $p_{\hat{x}^{\text{cf}}}$ over $n$ observations. Define the following quantities: $\bar{d} = \mathbf{E}(p_{\hat{x}^\text{cf}} - p_{x^\text{cf}})$ and $\tilde{d} = \sum_{i=1}^{n}(p_{\hat{x_i}^\text{cf}} - p_{x_i^\text{cf}})^2$. We claim that dataset owners can reject the null hypothesis $H_0: \bar{p}_{\hat{x}^{\text{cf}}} = \bar{p}_{x^{\text{cf}}} + \tau$ (versus $H_1: \bar{p}_{\hat{x}^{\text{cf}}}>\bar{p}_{x^{\text{cf}}} + \tau $) at significance level $\alpha$, if $\bar{d}$ and $\tilde{d}$ satisfy that
    \begin{align}
    \sqrt{n^2-n}(\bar{d}-\tau) - t_{1-\alpha}\sqrt{\tilde{d}+n\bar{d}^2} >0
    \end{align}
    where $\tau$ is the level of certainty and  $t_{1-\alpha}$ is the (1-$\alpha$)-quantile of t-distribution with $n-1$ degrees of freedom and $n$ is the sample size of $x^{cf}$.
\end{theorem}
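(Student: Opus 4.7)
The plan is to recognize the setup as a classical one-sided paired $t$-test and to rewrite its standard rejection criterion in the parametrization $(\bar d, \tilde d)$ of the statement. The first step is to introduce the paired differences $d_i := p_{\hat{x}_i^{\text{cf}}} - p_{x_i^{\text{cf}}}$ for $i = 1, \ldots, n$. Under $H_0$ these differences have (population) mean $\tau$, while $\bar d$ is their sample mean and $\tilde d = \sum_{i=1}^n d_i^{\,2}$ is the sample sum of squares.

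Next I would appeal to the textbook result that, under the usual i.i.d.\ normality assumption on the $d_i$'s, the pivot
\[
T \;=\; \frac{\bar d - \tau}{s_d/\sqrt{n}}
\]
follows a $t_{n-1}$ distribution under $H_0$, where $s_d$ is the unbiased sample standard deviation of the differences. The rejection region for the one-sided alternative $H_1$ at level $\alpha$ is therefore $\{T > t_{1-\alpha}\}$, or equivalently $\sqrt{n}(\bar d - \tau) > t_{1-\alpha}\, s_d$.

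The remaining work is the algebraic identity that links $s_d$ to $(\bar d, \tilde d)$. Expanding the centered sum of squares gives $(n-1)\,s_d^{\,2} = \sum_{i=1}^n (d_i - \bar d)^2 = \tilde d - n\bar d^{\,2}$. Plugging this into the rejection inequality and multiplying both sides by $\sqrt{n-1}$ produces
\[
\sqrt{n^2 - n}\,(\bar d - \tau) \;-\; t_{1-\alpha}\sqrt{\tilde d - n\bar d^{\,2}} \;>\; 0,
\]
which is the claimed bound up to the sign inside the radical (the $+n\bar d^{\,2}$ printed in the statement appears to be a typographical slip; the sample-variance identity produces a minus sign).

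The only genuinely non-routine point is the distributional assumption: because $p_{x^{\text{cf}}}, p_{\hat{x}^{\text{cf}}} \in [0,1]$ the differences are bounded, so strictly speaking the $t_{n-1}$ null law holds only approximately. I would address this either by invoking a CLT for moderate $n$ together with Slutsky's theorem (so that the $t$-quantile is asymptotically the right critical value), or by arguing that Student's critical values are conservative for light-tailed, bounded data. Once this caveat is discharged, the rest of the argument is a direct translation of the textbook paired $t$-test rejection rule into the $(\bar d, \tilde d)$ parametrization, and no further obstacles arise.
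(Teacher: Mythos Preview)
Your approach is essentially identical to the paper's: reduce to the paired differences $d_i$, invoke the CLT to justify the $t_{n-1}$ pivot, and then substitute the sample-variance identity into the one-sided rejection rule. Your observation about the sign is correct---the standard expansion gives $\sum_i (d_i-\bar d)^2 = \tilde d - n\bar d^{\,2}$, and the paper's own proof in fact carries the same $+n\bar d^{\,2}$ slip through from the statement.
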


\begin{proof}
    Since both $p_{\hat{x}^{\text{cf}}} $ and $p_{x^{\text{cf}}} $ have finite means and variances. Define the distance between them as $d=p_{\hat{x}^{\text{cf}}} -p_{x^{\text{cf}}}$, which also has a finite mean and variance. Given that the data points are independent, the Central Limit Theorem (CLT) ensures the sample mean of distance, $\bar{d}$, converges to a Gaussian distribution as the sample size $n $ becomes sufficiently large. Thus, we can restate the original hypothesis as:
    
     $$H_0: \bar{d}-\tau = 0$$
     $$H_1: \bar{d}-\tau > 0$$
    
    Let $\tilde{d} = \sum_{i=1}^nd_i^2$, we can construct the t-statistic as follows:
    \begin{align}
        T:= \frac{\sqrt{n}(\bar{d}-\tau)}{\sigma_d} \sim t(n-1)
    \end{align}
    where $\sigma_d$ is the standard deviation of $\bar{d}$ and $\bar{d}-\tau$, i,e.,
    \begin{align}
        \label{eq:var_d_bar}
        \sigma_d^2 &= \frac{1}{n-1}{\sum_{i=1}^{n}(d_i-\bar{d})^2} = \frac{1}{n-1}(\tilde{d}+n\bar{d}^2)
    \end{align}
    To reject the hypothesis $H_0$ at the significance level $\alpha$, we need to ensure that
    \begin{align}
        \label{eq:rej_region}
        \frac{\sqrt{n}(\bar{d}-\tau)}{\sigma_d} > t_{1-\alpha}
    \end{align}
    where $t_{1-\alpha}$ is the $(1-\alpha)$-quantile of t-distribution with $(n-1)$ degree of freedom. \\
    According to equation \ref{eq:var_d_bar} and \ref{eq:rej_region}, we have:
    \begin{align}
        \sqrt{n^2-n}(\bar{d}-\tau) - t_{1-\alpha}\sqrt{\tilde{d}+n\bar{d}^2} >0
    \end{align}
\end{proof}

\section{Details about Model Extraction Methods via CF explanations}
\label{appendix:attacks}
In this section, we describe \emph{MRCE} \citep{aivodji2020model} and \emph{DualCF} \citep{wang2022dualcf}, two approaches that use CF explanations to perform model extraction attacks.

\textbf{MRCE.}
\citet{aivodji2020model} showed that adversaries can perform a query-efficient model extraction attack by making half as many queries to $F_W$. Instead of using an attack dataset $D_x$ of size $M$, the MRCE attack proceeds by using an $M/2$ sized attack dataset $D^{MRCE}_x=\{(x_i,F_W(x_i)\}^{M/2}_i$ and uses the corresponding CF explanations and their predictions $D^\text{cf}=\{(\cf_i, 1 - F_W(x_i))\}^{M/2}_i$ as additional training data which does not require querying the ML model (since the labels on CF explanation points $\cf$ are assumed to be opposite to that of the original points $x$). Thus, leveraging CF explanations and their labels allows the attacker to get an $M$-sized training dataset by making only $M/2$ queries to $F_W$. Finally, the attacker use both $D^{MRCE}_x$ and $D^\text{cf}$ for training their extracted ML model $f_{w'}:\mathcal{X} \to [0,1]$.

\textbf{DualCF.}
Alternatively, \citet{wang2022dualcf} improves the quality of the training dataset used by the attacker to train the extracted ML model $f_{w'}$. The DualCF attack proceeds as follows: (i) the attacker queries $F_W$ with an initial set of attack points to get $D^{DCF}_x=\{(x_i,F_W(x_i)\}^{M/2}_i$ and corresponding CF explanations $D^\text{cf}=\{(\cf_i, 1 - F_W(x_i))\}^{M/2}_i$. (ii) The set $D^\text{cf}$ is also used to query $F_W$ to get a dual CF dataset $D^\text{cf'}=\{(x^{cf'}_i, 1-(1-F_W(x_i)\}^{M/2}_i$ (i.e., the set of CF explanations $x^{cf'}$ for the original CF explanations $x^{cf}$ generated by the defender's CF module). Intuitively, if the defender uses a high-quality CF explanation module, $D^\text{cf}$ and $D^\text{cf'}$ would represent a dataset with smaller margins, and hence enable more accurate recovery (or extraction) of underlying decision boundary of $F_W$. Finally, the attacker use both $D^\text{cf}$ and $D^\text{cf'}$ for training their extracted ML model $f_{w'}:\mathcal{X} \to [0,1]$.

\textbf{Protecting against Model Extraction Attacks.} 
As described in Section \ref{sec:related}, a common approach for mitigating model extraction attacks (in general) is to degrade the quality of the proprietary ML model $F_W$ to disincentivize the attacker from conducting such an attack (thereby protecting the model) \citep{tang2024modelguard}. In the context of MRCE \cite{aivodji2020model} and DualCF \cite{wang2022dualcf}, this approach would entail degrading the quality (as measured by widely used metrics such as validity and proximity \cite{verma2020counterfactual}) of the generated CF explanations such that it becomes unattractive for an attacker to use CF explanations as part of the training dataset used for model extraction. However, we argue that such an approach is unsatisfactory to use in real-world applications since lowering the quality of CF explanations negatively affects our ability to provide meaningful and actionable recourse to negatively affected end-users.

Therefore, in this paper, we adopt a digital watermarking approach \cite{li2022untargeted,song2017machine} to protect against these model extraction attacks that use CF explanations. Our watermarking approach can ensure that unauthorized model extraction attacks that use watermarked CF explanations can be easily identified by the defender, while maintaining the quality of the generated CF explanations.

\section{Implementation Details}
\label{app:implementation}

Here we provide implementation details of our proposed framework on three datasets listed in Section~\ref{sec:eval}. We provide the code, dataset, and experiment logs in the supplemental material, or be accessed in this repository: \codeurl.

\textbf{Software and Hardware Specifications}. 
All experiments are run using Python (v3.10.10) with jax (v0.4.20) \citep{jax2018github}, scikit-learn (v1.2.2) \citep{pedregosa2011scikit}, and jax-relax (v0.2.7) \citep{guo2023relax} for the implementations. All our experiments were run on an Ubuntu 22.04.4 LTS virtual machine on the Google Cloud Platform with an Nvidia V100 GPU.

\textbf{Feature Engineering.} 
We use the default feature engineering pipeline provided in jax-relax \citep{guo2023relax}. 
Specifically, for continuous features, we scale all feature values into the [0, 1] range. To handle the categorical features, we transform the categorical features into numerical representations via one-hot encoding.
Note that during the watermarking procedure, we treat the categorical features as immutable features, i.e., we do not add perturbations to the categorical features.

\textbf{Hyperparamters.}
For all three datasets and CF methods, we run $T=50$ steps for watermarking CF explanations, and set $E=0.05$ as the maximum perturbation. The step size $\alpha = 2.5 \times \delta / T$ (based on \cite{madry2018towards}) for solving the bi-level problem in Equation~\ref{eq:outer}-\ref{eq:w2}. On the attack side, the model extractors have a maximum of 128 queries for extracting models reported in Table~\ref{tab:result}.
In addition, Table~\ref{tab:params} provides a detailed overview of the hyperparameters used for each dataset and CF method.

\begin{table*}[h]
\centering
\caption{\label{tab:params}Hyperparameters for each dataset.}
\begin{tabular}{@{}ll|c|c|c|c|c@{}}
\toprule
\textbf{CF Method} & \textbf{Dataset} & \multicolumn{1}{l|}{\textbf{Batch Size}} & \multicolumn{1}{l|}{\textbf{k}} & \multicolumn{1}{l|}{\textbf{learning rate}} & \multicolumn{1}{l|}{\textbf{$\tau$}} & \multicolumn{1}{l}{\textbf{Ensembels}} \\ \midrule
\multirow{4}{*}{C-CHVAE} & Cancer & 64 & 5 & 0.03 & 0.05 & 32 \\
 & Credit & 16 & 10 & 0.01 & 0.05 & 8 \\
 & HELOC & 128 & 5 & 0.1 & 0.05 & 32 \\
 & Loan & 64 & 10 & 0.01 & 0.05 & 8 \\ \midrule
\multirow{4}{*}{DiCE} & Cancer & 64 & 10 & 0.005 & 0.1 & 16 \\
 & Credit & 64 & 10 & 0.01 & 0.05 & 8 \\
 & HELOC & 64 & 5 & 0.1 & 0.05 & 8 \\
 & Loan & 128 & 10 & 0.03 & 0.05 & 8 \\ \midrule
\multirow{4}{*}{Growing Sphere} & Cancer & 128 & 10 & 0.02 & 0.05 & 32 \\
 & Credit & 64 & 10 & 0.01 & 0.05 & 8 \\
 & HELOC & 128 & 5 & 0.1 & 0.05 & 16 \\
 & Loan & 128 & 5 & 0.05 & 0.05 & 16 \\ \bottomrule
\end{tabular}
\end{table*}

\begin{figure}
    \centering
    \includegraphics[width=0.98\columnwidth]{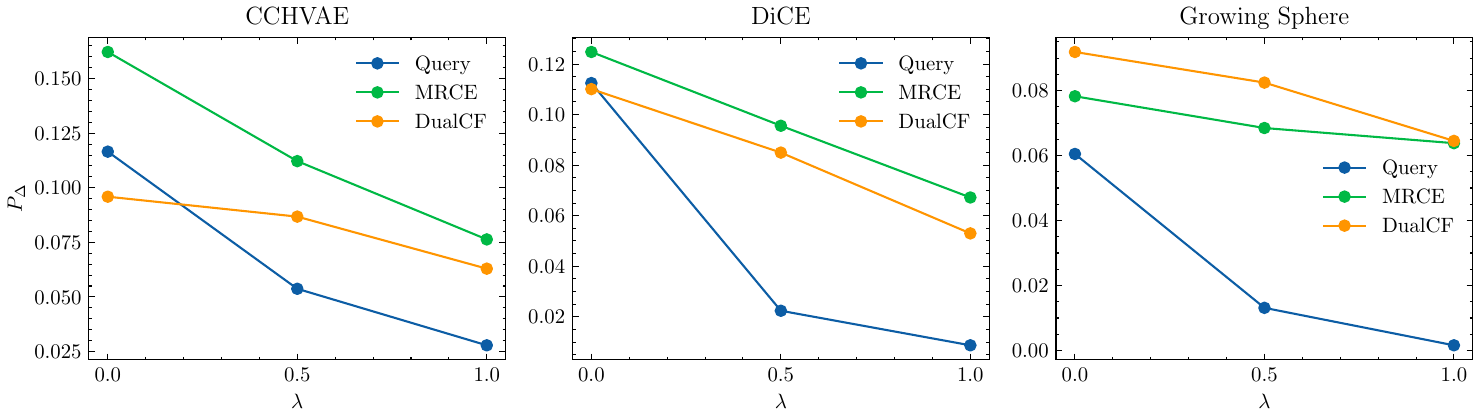}
    \caption{The influence of the regularization term $\lambda$ on the \emph{credit} dataset.}
    \label{fig:reg-credit}
\end{figure}

\begin{figure}[t]
    \centering
    \includegraphics[width=0.98\columnwidth]{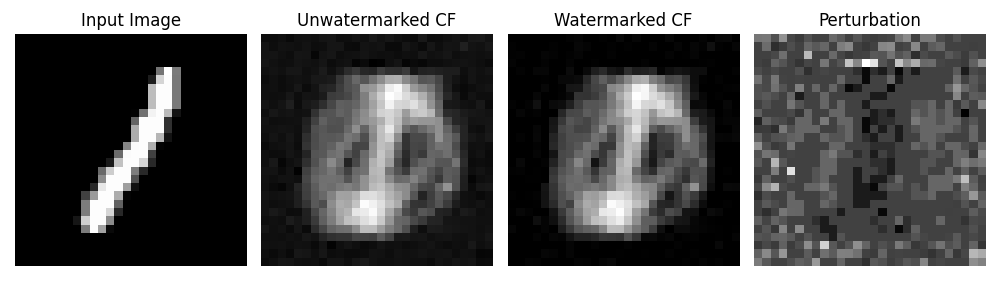}
    \caption{An example of watermarked CF explanation on MNIST.}
    \label{fig:mnist}
\end{figure}

\section{Additional Results}

\textbf{Additional Ablations on Loss Functions.}
We provide additional ablations on loss functions of Eq.~\ref{eq:outer}. Table~\ref{tab:ablation-loss-cchvae} reports the F1-score and the usability degradation (as measured by validity decreases (in \%), and proximity increases (in \%)) on different loss functions, when evaluating the Cancer dataset using C-CHVAE. We observe similar trend as we observed in the main paper.

\begin{table*}[t]
\caption{Ablations of the loss functions of CFMark on \emph{cancer} dataset when using C-CHVAE. } \label{tab:ablation-loss-cchvae}
\centering\small
\begin{tabular}{@{}lccccccccc@{}}
\toprule
 & \multicolumn{9}{c}{\textbf{Validity Loss}} \\ \cmidrule(l){2-10} 
 & \multicolumn{3}{c|}{\textit{Log Diff}} & \multicolumn{3}{c|}{\textit{KL}} & \multicolumn{3}{c}{\textit{Residual}} \\
\multirow{-3}{*}{\textbf{Poison Loss}} & \cellcolor[HTML]{FEE9D4}F1 & \cellcolor[HTML]{DAE8F5}Val. (\%) & \multicolumn{1}{c|}{\cellcolor[HTML]{D8F0D3}Prox. (\%)} & \cellcolor[HTML]{FEE9D4}F1 & \cellcolor[HTML]{DAE8F5}Val. (\%) & \multicolumn{1}{c|}{\cellcolor[HTML]{D8F0D3}Prox. (\%)} & \cellcolor[HTML]{FEE9D4}F1 & \cellcolor[HTML]{DAE8F5}Val. (\%) & \cellcolor[HTML]{D8F0D3}Prox. (\%) \\ \midrule
\multicolumn{1}{l|}{\textit{Log Diff}} & 0.13 & 9.73 & \multicolumn{1}{c|}{0.39} & \cellcolor[HTML]{EFEFEF}0.92 & \cellcolor[HTML]{EFEFEF}2.60 & \multicolumn{1}{c|}{\cellcolor[HTML]{EFEFEF}0.75} & 0.72 & 8.18 & 0.49 \\
\multicolumn{1}{l|}{\textit{KL}} & 0.04 & 12.07 & \multicolumn{1}{c|}{0.44} & 0.74 & 4.11 & \multicolumn{1}{c|}{0.82} & 0.21 & 12.15 & 0.43 \\
\textit{Residual} & 0.00 & 11.81 & 0.38 & 0.93 & 2.21 & 0.83 & 0.09 & 10.90 & 0.42 \\ \bottomrule
\end{tabular}
\end{table*}

\textbf{Confusion Matrix.}
Figure~\ref{fig:cm} highlights the detectability achieved by CFMark. Notably, CFMark achieves high True Positive and True Negative across all three CF methods and datasets.
This result highlights the effectiveness of CFMark in watermarking CF explanations.

\begin{figure*}[ht!]
     \centering
     \begin{subfigure}[h]{0.98\textwidth}
         \centering
         \includegraphics[width=0.98\textwidth]{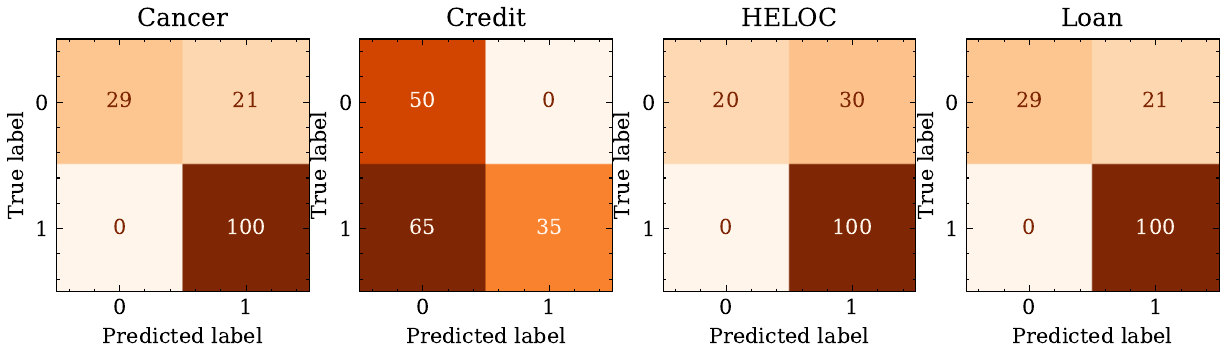}
         \caption{Confusion matrix of three datasets on CCHVAE.}
         \label{fig:cm-cchvae}
     \end{subfigure}
     \hfill
     \begin{subfigure}[h]{0.98\textwidth}
         \centering
         \includegraphics[width=0.98\textwidth]{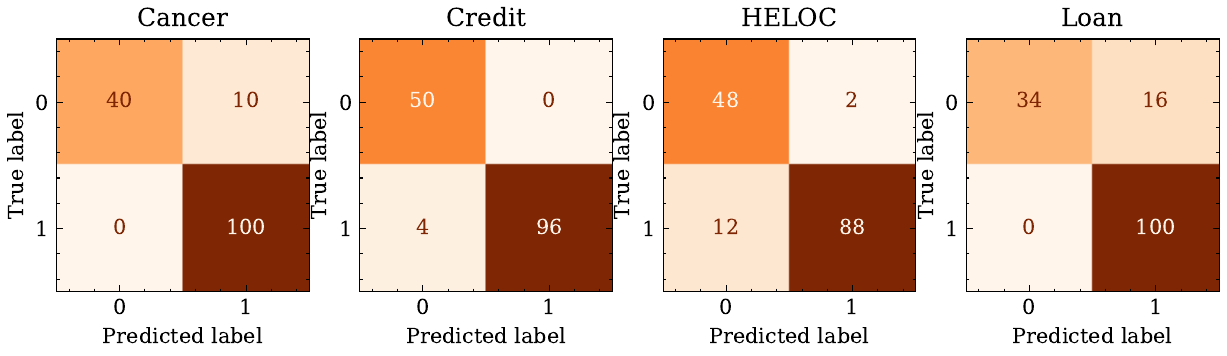}
         \caption{Confusion matrix of three datasets on DiCE.}
         \label{fig:cm-dice}
     \end{subfigure}
     \hfill
     \begin{subfigure}[h]{0.98\textwidth}
         \centering
         \includegraphics[width=0.98\textwidth]{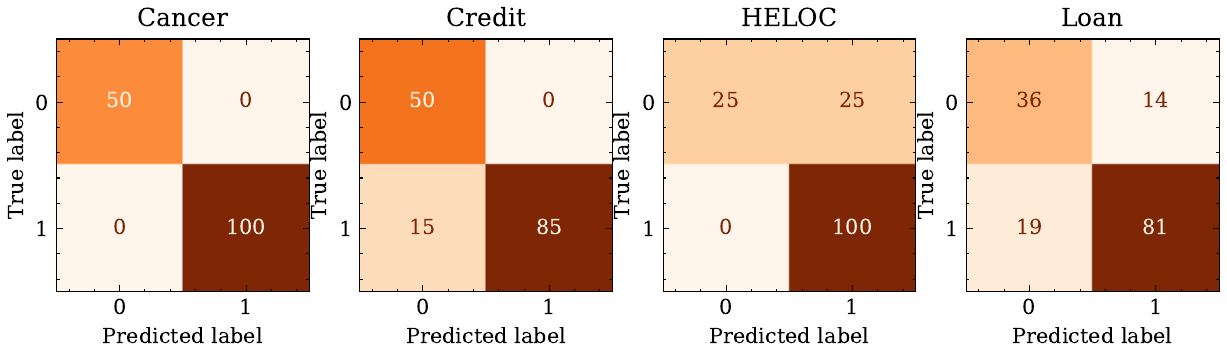}
         \caption{Confusion matrix of three datasets on Growing Sphere.}
         \label{fig:cm-growing}
     \end{subfigure}
     \caption{Confusion matrix of identifying unauthorized model extraction attacks through CF explanations from CCHVAE, DiCE, and Growing Sphere across three datasets.}
    \label{fig:cm}
\end{figure*}

\textbf{Loss curve.}
Figure~\ref{fig:loss} shows the loss curve of crafting watermarks on CF explanations generated from CCHVAE, DiCE, and Growing Sphere across three datasets. The watermarking procedure is stable.

\begin{figure*}[ht!]
     \centering
     \begin{subfigure}[h]{0.98\textwidth}
         \centering
         \includegraphics[width=0.98\textwidth]{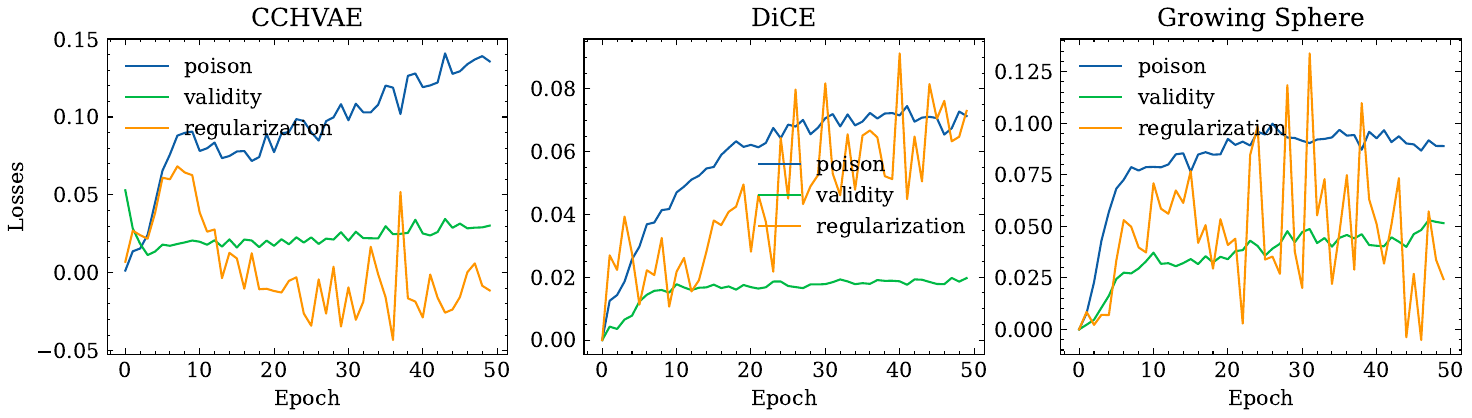}
         \caption{Loss curves on the \emph{Cancer} dataset.}
         \label{fig:loss-cancer}
     \end{subfigure}
     \hfill
     \begin{subfigure}[h]{0.98\textwidth}
         \centering
         \includegraphics[width=0.98\textwidth]{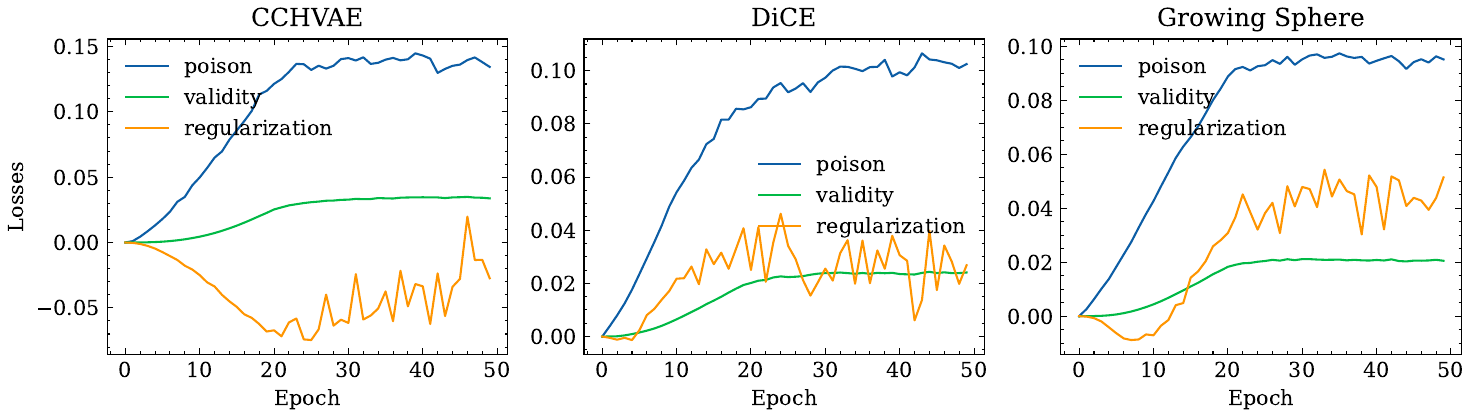}
         \caption{Loss curves on the \emph{Credit} dataset.}
         \label{fig:loss-credit}
     \end{subfigure}
     \hfill
     \begin{subfigure}[h]{0.98\textwidth}
         \centering
         \includegraphics[width=0.98\textwidth]{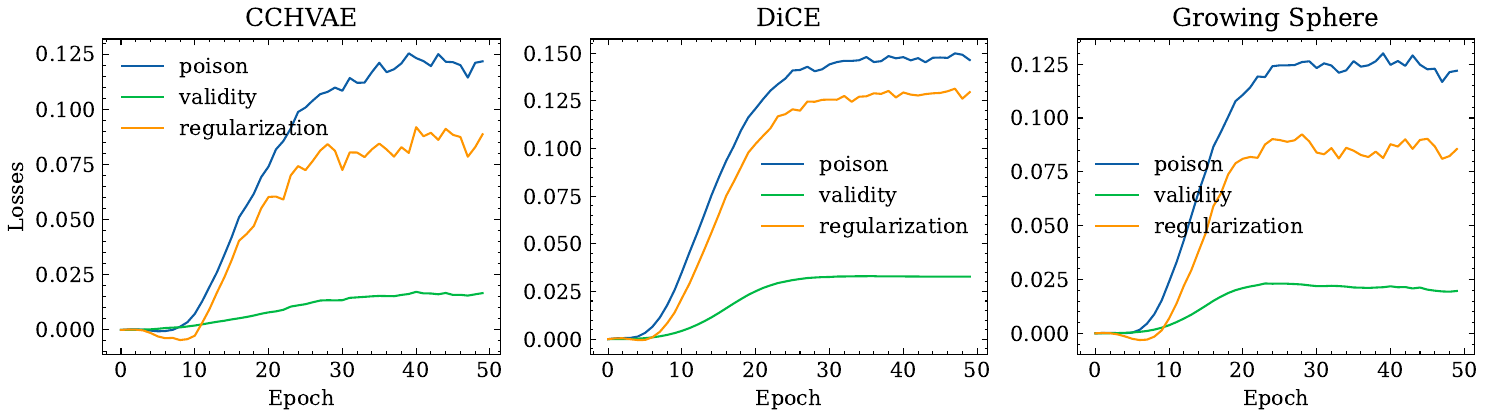}
         \caption{Loss curves on the \emph{HELOC} dataset.}
         \label{fig:loss-heloc}
     \end{subfigure}
     \hfill
      \begin{subfigure}[h]{0.98\textwidth}
         \centering
         \includegraphics[width=0.98\textwidth]{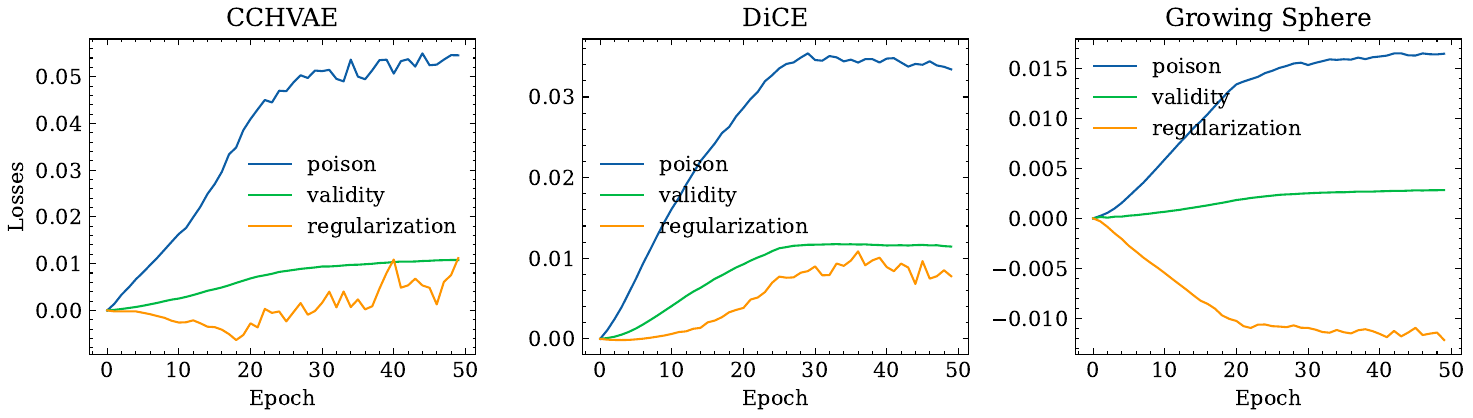}
         \caption{Loss curves on the \emph{Loan} dataset.}
         \label{fig:loss-loan}
     \end{subfigure}
     \caption{Loss curves of crafting watermarks on CF explanations generated from CCHVAE, DiCE, and Growing Sphere across three datasets.}
    \label{fig:loss}
\end{figure*}

\section{Case Study for Watermarking CF Explanations generated from C-CHVAE on the MNIST dataset.}

This section presents a case study on watermarking counterfactual (CF) explanations for image datasets. CF explanation techniques are not common choices for explaining image ML models. 
However, image data is valuable for understanding the impact of our watermarking technique because it allows for visual inspection and analysis of each watermarked CF explanation. This approach provides insights into how our watermarking technique influences the quality of CF explanations.

We experiment with the MNIST dataset \citep{deng2012mnist} for evaluating CF explanations. We focus on image digits of either 0 or 1. We use C-CHVAE to generate CF explanations.
Figure~\ref{fig:mnist} showcases an example of watermarked CF explanations produced by C-CHVAE. Notably, the watermarked CF explanations maintain the visual structure observed in their unwatermarked counterparts. This observation further demonstrates that the watermarking process has little impact on the quality of CF explanations.

\end{document}